\documentclass[letterpaper,journal]{IEEEtran}

\usepackage{amsmath,amsfonts,amsthm}
\usepackage{bm}
\usepackage{mathrsfs}
\usepackage{microtype}
\usepackage{makecell}

\usepackage{graphicx}
\usepackage[table]{xcolor}
\usepackage{caption}

\usepackage{array}
\usepackage{booktabs}
\usepackage{multirow}
\usepackage{tabularx}

\usepackage{siunitx}
\usepackage{enumitem}
\usepackage{pifont}

\usepackage[ruled,linesnumbered,vlined]{algorithm2e}

\usepackage[caption=false,font=footnotesize]{subfig}

\usepackage{svg}
\usepackage{orcidlink}
\usepackage{newunicodechar}
\newunicodechar{✓}{\checkmark}
\usepackage[utf8]{inputenc} 

\usepackage{hyperref}
\hypersetup{
  colorlinks=true,
  linkcolor=blue,
  urlcolor=magenta,
  citecolor=blue
}

\usepackage[switch]{lineno}

\theoremstyle{remark}
\newtheorem{remark}{\indent Remark}

\newtheorem{theorem}{\indent Theorem}
\newtheorem{definition}{Definition}

\definecolor{darkred}{RGB}{192,0,0}

\begin{document}

\title{Hierarchical Reference Sets for Robust Unsupervised \\ Detection of Scattered and Clustered Outliers}

\author{Yiqun Zhang\orcidlink{0000-0002-0328-987X},~\IEEEmembership{Senior Member,~IEEE}, 
Zexi Tan\orcidlink{0009-0007-8919-9077},
Xiaopeng Luo\orcidlink{0009-0005-9214-7716}~\IEEEmembership{Member,~IEEE},
Yunlin Liu\orcidlink{0009-0004-3718-6790}
        % <-this % stops a space
\thanks{This work was supported in part by the National Natural Science Foundation of China (NSFC) under grant: 62476063, the Natural Science Foundation of Guangdong Province under grant: 2025A1515011293, and the General University Youth Innovation Talent
Program of Guangdong Province under grant 2024KQNCX094.}
\thanks{All authors are with the School of Computer Science and Technology, Guangdong University of Technology, Guangzhou, China. (\textit{Corresponding author: Xiaopeng Luo}, e-mail: luoxiaopeng@mails.gdut.edu.cn) 
}}

% The paper headers
\markboth{IEEE Internet of Things Journal}%
{Shell \MakeLowercase{\textit{et al.}}: A Sample Article Using IEEEtran.cls for IEEE Journals}

\maketitle
%\linenumbers
\begin{abstract}
Most real-world IoT data analysis tasks, such as clustering and anomaly event detection, are unsupervised and highly susceptible to the presence of outliers. In addition to sporadic scattered outliers caused by factors such as faulty sensor readings, IoT systems often exhibit clustered outliers. These occur when multiple devices or nodes produce similar anomalous measurements, for instance, owing to localized interference, emerging security threats, or regional false alarms, forming micro-clusters. These clustered outliers can be easily mistaken for normal behavior because of their relatively high local density, thereby obscuring the detection of both scattered and contextual anomalies. To address this, we propose a novel outlier detection paradigm that leverages the natural neighboring relationships using graph structures. This facilitates multi-perspective anomaly evaluation by incorporating reference sets at both local and global scales derived from the graph. Our approach enables the effective recognition of scattered outliers without interference from clustered anomalies, whereas the graph structure simultaneously helps reflect and isolate clustered outlier groups. Extensive experiments, including comparative performance analysis, ablation studies, validation on downstream clustering tasks, and evaluation of hyperparameter sensitivity, demonstrate the efficacy of the proposed method. The source code is available at \url{https://github.com/gordonlok/DROD}.
\end{abstract}

\begin{IEEEkeywords}
Outlier detection, Natural neighbor, Graph reference set, Clustering
\end{IEEEkeywords}
\section{Introduction}%label检查
\IEEEPARstart{O}{utlier} detection is crucial in many machine learning and data mining tasks, especially in Internet of Things (IoT) applications \cite{chandola2009anomaly}. It enables the discovery of emerging concepts~\cite{zhao2023unsupervised,zhao2022heterogeneous} in IoT scenarios, including novel attack behaviors and previously unseen device malfunctions. It additionally serves as a critical preprocessing stage for filtering abnormal sensor measurements, thus enhancing the reliability of downstream tasks such as device status monitoring~\cite{yang2021mean} and real-time decision automation \cite{zhang2025learningICD}. Since data labels are usually unavailable in most real IoT scenarios due to the massive scale of deployments and dynamic environments~\cite{zhang2025learning}, unsupervised outlier detection remains a challenging task for IoT data analytics~\cite{chen2024mgod}. From the perspective of the number of data instances involved in forming abnormal patterns, outliers are typically categorized into two categories: point outliers and collective outliers \cite{boukerche2020outlier}. A point outlier is a single data instance distinctly deviant from the rest of the dataset; it represents the most straightforward and well-studied type of outlier. In contrast, collective outliers consist of a group of instances that are anomalous as a whole, a phenomenon frequently encountered in various practical applications.

\begin{figure}[t]
	\centering
         \includegraphics[width=0.9\linewidth]{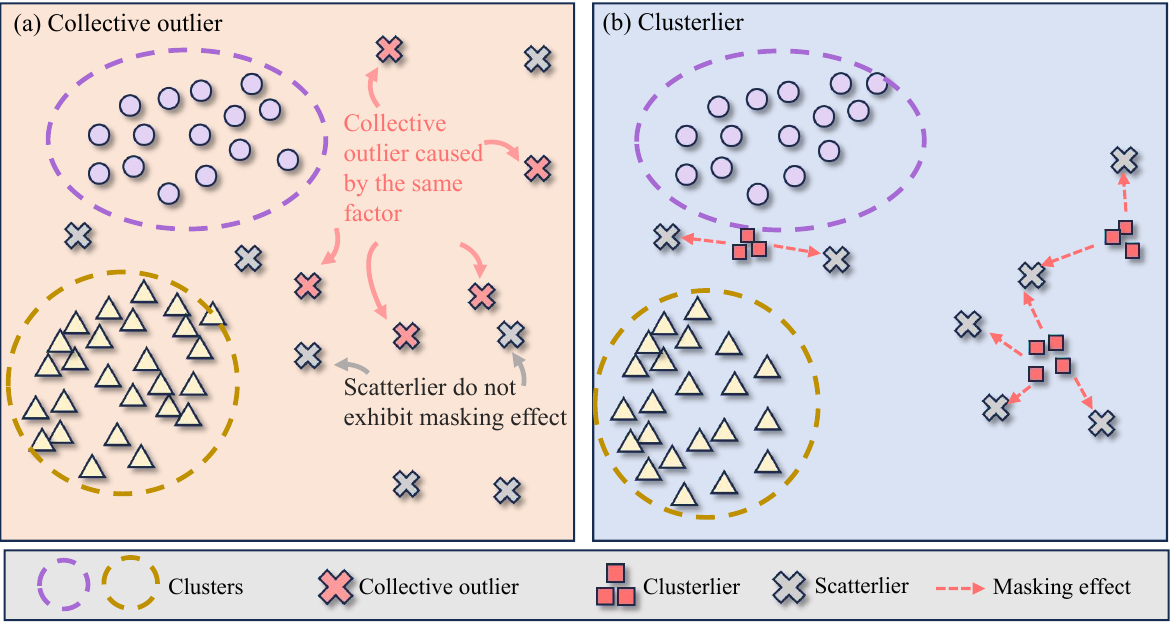}
\caption{Illustration of scatterliers (gray cross), clusterliers (red square), collective outliers (red cross) and the masking effect in outlier detection (red dotted arrow) of a dataset with normal clustered samples (dotted circle).}
	\label{figure-outliers}
\end{figure}

To better characterize the spatial manifestations of anomalies, this paper refines the established categories from a distribution perspective, i.e., scatterliers and clusterliers. A scatterlier represents a spatially isolated point outlier that significantly deviates from the majority of the data and lies in sparse regions away from dense clusters. In contrast, a clusterlier is a form of collective outlier that forms a compact but anomalous micro-cluster~\cite{cai2024robust,zhang2025adaptive}, where points are mutually close, creating a locally dense region, yet remain globally divergent from normal clusters. While both types are illustrated in Fig. \ref{figure-outliers}, clusterliers present a particularly challenging detection problem due to the masking effect caused by their spatial proximity: typical collective outliers and clusterliers are distinct IoT anomalies with fundamentally different traits and detectability. Collective outliers, referring to grouped samples, arise from a common cause. For example, multiple sensors in an area report abnormally high power use together due to a shared power outage \cite{fengmobile}. They have consistent, recognizable patterns, so conventional methods detect them easily. In contrast, clusterliers are densely grouped anomalous points. A typical case is a botnet of compromised devices communicating in similar malicious ways. Within such clusters, individual anomalies mask each other, making each malicious device seem normal relative to others and the whole cluster misclassified as legitimate. This mutual masking effect gradually accumulates and ultimately suppresses the detectability of clusterliers, rendering them much less observable in practice than collective outliers.

To detect various types of outliers, existing methods can be categorized into global and local methods according to the ``reference set'' they adopted for computing the anomaly indices of samples \cite{campos2016evaluation}. The reference set of a sample usually acts as the statistical basis for the anomaly score computation and thus usually dominates the detection results. Global methods refer to those that utilize the entire dataset as an identical reference set for each sample. Some typical global methods~\cite{Iforest, GAOXN2022} usually rely on implicit assumptions about the whole data distribution, e.g., a certain number of clusters with randomly distributed scatterliers, relatively large spherical-shaped clusters with very small clusterliers, etc. Once the real scenario does not match the predefined theoretical assumptions, or even when mixed or hybrid distributions occur, the detection accuracy of the existing global methods can be greatly influenced \cite{outlieroverview}.

By contrast, the local methods are more flexible in determining outliers based on local distribution and, thus, often introduce less bias and achieve better performance. $k$-Nearest Neighbors ($k$NN)-based methods \cite{lof,DGOF} detect outliers locally according to the $k$ nearest neighbors of each sample as the reference set. They adopt the common basic idea that if a sample is connected with fewer neighbors, it is determined as an outlier. In recent years, several advanced $k$NN-based methods \cite{tang2017local} have been proposed, employing various perspectives to compute the anomaly indices based on the $k$ nearest neighbors of the samples. To address the non-trivial determination of the number of neighbors $k$, some methods \cite{wahid2021nanod} have been proposed that adopt the Natural Neighbor concept \cite{Zhu2016}, which can adaptively select $k$ for different samples with considering their local distributions. However, all the above-mentioned methods have not considered the detection of clusterliers, as $k$NN is a sample-wise neighbor concept that is unsuitable for evaluating the abnormal level of clusterliers.

In summary, most existing unsupervised outlier detection methods are either unable to handle scatterliers \cite{OCSVM} or ignore the existence of clusterliers \cite{sheikholeslami1998wavecluster}. A more challenging issue is that the clusterliers can severely hamper the scatterlier detection by interfering the reference set formation of the nearby scatterliers, offering them an excessive number of neighbors and masking them from the $k$NN-based local outlier detection. Therefore, considering the intrinsic correlation between scatterliers and clusterliers and collectively solving the detection of both is the key to achieving robust and accurate outlier detection in complex datasets.

This paper proposes a novel unsupervised outlier detection method to simultaneously explore scatterliers and clusterliers. It introduces a dual reference set formation strategy to form Natural Neighbor Reference Subsets (NRS) and NRS-based Graph Reference Sets (GRS) for comprehensive anomaly index measurement \cite{fenggraph}. The NRS is formed based on the natural neighbors of each sample, and only highly similar samples can appear in the same NRS. By ensuring that the Local Anomaly Index (LAI) is only measured by considering the intra-NRS samples, the masking effect brought by the clusterliers can be considerably mitigated. The GRS is a graph that connects closely distributed NRSs. It acts to reflect the anomaly index of each NRS from a macro distribution perspective. That is, compared to a prominent cluster represented by a larger number of NRSs, a clusterlier is usually a very small and isolated cluster represented by a smaller number of NRSs. Since the NRSs of a clusterlier are not closely connected with the other NRSs by the corresponding GRS, their low connectivity reflected by the GRS makes them easier to detect. As a result, the dual anomaly index more informatively reflects the abnormal degree of each sample to achieve a more robust outlier detection. The four main contributions are summarized as follows:
\begin{itemize}\label{item_Cntrb}
	\item \relax This paper presents a novel unsupervised outlier detection paradigm. To the best of our knowledge, this is the first attempt to simultaneously tackle the detection of both scatterliers and clusterliers.
	
	\item \relax Hierarchical dual reference sets are developed to measure the anomaly indices of samples informatively. As a result, the masking effect caused by the clusterliers is considerably mitigated, effectively enhancing the overall outlier detection accuracy.

        \item \relax The effectiveness of the proposed method has been validated on the clustering tasks. As both scatterliers and clusterliers can be effectively detected and eliminated, better clustering performance can thus be achieved. 
	
	\item \relax Compared to most existing outlier detection methods that are sensitive to the hyper-parameter selection and outlier types, the proposed method demonstrates high robustness, outperforming the counterparts on 32 benchmark datasets.
\end{itemize}

\section{Related Work}\label{sec_RelWr}
\subsection{Density-Based Outlier Detection}
The density information of samples~\cite{zou2025sdenk} has long been widely utilized in outlier detection. Early methods compared the density of each sample in the dataset globally for detection. However, these methods are not suitable for datasets composed of regions with varying densities. Hence, the notion of local density~\cite{cheung2018fast} emerged to address this constraint. Detectors based on the local density assume that outliers should have a lower density compared to their neighbors. Local Outlier Factor (LOF)~\cite{lof} stands as the pioneering local outlier detection method, quantifying the local density of a sample based on its $k$ nearest neighbors. The LOF score of a sample is determined by the ratio of its local density to that of its neighbors. A higher score indicates a greater likelihood of being an outlier. Choosing the parameter $k$ in LOF poses a significant challenge, prompting the development of LOF variants. To tackle the $k$ selection issue, recent works \cite{KFC} have been proposed to evaluate the optimal $k$. However, when clusterliers also exhibit markedly high local density, they may remain undetected by the above methods due to mutual masking.

% clustering-based%
\subsection{Clustering-Based Outlier Detection}
Clustering techniques \cite{zhang2025learningHARR} have been widely employed to detect outliers. Early methods assumed that unclustered samples are outliers. For instance, the FindOut algorithm \cite{yu2002findout} extends the WaveCluster algorithm \cite{sheikholeslami1998wavecluster}, declaring samples outside the detected clusters as outliers. However, this assumption makes it seem more like conducting clustering tasks in noisy environments rather than outlier detection tasks. Additionally, some clustering-based methods first use clustering algorithms to obtain clusters and then employ traditional outlier detection algorithms in clusters to calculate anomaly scores. For example, the work \cite{muhima2020lof} combined $K$-means clustering \cite{kmeans} and LOF algorithms to identify outliers. These methods still target scatterliers. Researchers have gradually observed that outliers can form clusters in some scenarios. CBLOF \cite{CBLOF} considers micro-clusters, independent of large clusters, as a type of outlier. After identifying cluster structures, CBLOF categorizes clusters into large and small clusters. It then measures the anomaly score for each sample based on its cluster size and the distance to the nearest large cluster. In general, the performance of clustering-based methods relies heavily on the chosen clustering algorithm, thus struggling to adapt to datasets with heterogeneous clusters, e.g., clusters with varying sizes, densities, and shapes.

\begin{table}[t]

\centering
\caption{Explanation of Symbols.}
\label{tab:symbols}
\resizebox{\linewidth}{!}{
\begin{tabular}{@{}>{ }l>{ }l>{ }l@{}}
\toprule
\textbf{Section} & \textbf{ Symbol} & \textbf{ Definition} \\
\midrule
\multirow{5}{*}{{ III}} & { $x_i$} & { Data sample} \\
& { $NB(x_i)$} & { Natural neighbor set of $x_i$} \\
& { $kNN_\lambda(x_i)$} & { $k$-nearest neighbors with scope $\lambda$} \\
& { $NBG$} & { Natural Neighborhood Graph} \\
& { $NBG_{ij}$} & { Edge indicator in $NBG$ between $x_i,x_j$} \\
\hline
\multirow{5}{*}{{ IV-A}} & { $\rho(x_i)$} & { Local natural density of $x_i$} \\
& { $dist(x_i, x_j)$} & { Euclidean distance between $x_i,x_j$} \\
& { $S$} & { Set of natural neighbor subsets} \\
& { $s_m$} & { Subset in $S$} \\
& { $U$} & { Upper limit of subset size} \\
\hline
\multirow{7}{*}{{ IV-B}} & { $LAI(x_i)$} & { Local Anomaly Index of $x_i$} \\
& { $\rho_{max}$} & { Max local density in $s_m$} \\
& { $GRS$} & { Graph Reference Sets} \\
& { $LS(s_m, s_w)$} & { Link Strength between $s_m,s_w$} \\
& { $NBP(s_m, s_w)$} & { Natural neighbor pairs between $s_m,s_w$} \\
& { $c_m$} & { Center of $s_m$} \\
& { $SAI(s_m)$} & { Subset Anomaly Index of $s_m$} \\
\hline
\multirow{5}{*}{{ IV-C}} & { $DAI(x_i)$} & { Dual Reference Sets-based Anomaly Index} \\
& { $\beta(s_m)$} & { Weight of $LAI$ for $s_m$} \\
& { $\eta$} & { Sampling rate} \\
& { $T$} & { Sampling times} \\
& { $\mathcal{X}_t$} & { $t$-th sampled dataset} \\

\bottomrule
\end{tabular}
}
\end{table}

\section{Preliminaries}\label{sct:pre}
The proposed method determines reference sets based on the natural neighbor relationship, which is an advanced $k$NN relationship inspired by human friendship. That is, two samples constitute a natural neighbor relationship only when they are mutually within each other's neighborhood set. For clarity, Table~\ref{tab:symbols} lists the key notations and symbols used in this paper. 

Furthermore, several recent methods, including GB \cite{GB}, HDIOD \cite{HDIOD}, and the structurally enhanced GNAN \cite{gnan}, are thoroughly evaluated. The comprehensive results of this evaluation are presented in Section \ref{sec_exp}. Based on the comparative analysis, the NRS framework is selected for integration into DROD. This decision is motivated by its parameter-free natural neighbor concept, which inherently adapts to variable neighbor counts per sample and exhibits strong intra-cluster cohesion \cite{Zhu2016}. These characteristics enhance the method's suitability for robust subset partitioning and reference set construction, while also facilitating automated parameter selection and supporting the formation of hierarchical reference structures.

\begin{definition}\label{def:nb}
[Natural Neighbor - NB] Sample $x_j$ is a natural neighbor of $x_i$ if they satisfy the mutual neighboring condition:
\begin{equation}
    \label{equ_nan}
    x_j \in \mathrm{NB}(x_i) \Leftrightarrow (x_i \in \mathrm{kNN}_\lambda(x_j)) \land (x_j \in \mathrm{kNN}_\lambda(x_i)),
\end{equation}
where $\lambda$ is an eigenvalue (i.e., natural neighbor eigenvalue in \cite{Zhu2016}), signifying the search scope of $k$NN. The value of $\lambda$ is adaptively determined by the natural neighbor search algorithm.
\end{definition}

\begin{definition}\label{def:nbg}
[Natural Neighborhood Graph - NBG] The Natural Neighborhood Graph is a way of representing
the Natural Neighbor relationship of the dataset. Each vertex $v_i$ in the graph represents a data point $x_i$. $x_i$
and $x_j$ are connected if $x_i$ is a natural neighbor of $x_j$. An edge in $NBG$ is defined as:
\begin{equation}
NBG_{ij}=
\begin{cases}1&\mathrm{if~x_i \in NB(x_j) ~or~x_j \in NB(x_i)}\\0&\mathrm{otherwise}.
\end{cases}
\end{equation}
\end{definition}

In this work, NBG stores natural neighbor relationships. After NBG formation, samples without graph-connected natural neighbors can be considered outlier candidates. 

\section{Proposed Method}\label{sct:proposed}
For datasets with outliers belonging to scatterliers and clusterliers, the anomaly index for outlier detection should comprehensively reflect the sample-wise and cluster-wise anomaly levels. The entire dataset is partitioned into subsets based on natural neighbor relationships among data samples to obtain the local reference set for scatterliers. Subsequently, graphs are constructed on these subsets to form macro reference sets for clusterliers. Anomaly indices in the dual hierarchical reference sets are ultimately combined to form a comprehensive quantification of outliers for accurate outlier detection.

The pipeline for scoring abnormality in the proposed method is illustrated in Fig.~\ref{figure-framework}, and is further outlined in the following steps: 1) partition the whole dataset into Natural neighbor subsets based on natural neighbor relationships (Section~\ref{sec_subset_p}), 2) construct Graph Reference Sets (GRS) to measure the anomaly indices of subsets by analyzing their distribution (Section~\ref{sec_as}), 3) treat Natural Neighbor Subsets as Natural Neighbor Reference Subsets (NRS) and calculate the LAIs of samples within each NRS (Section~\ref{sec_as}), and 4) form the overall anomaly indices by combining the two types of indices obtained in 2) and 3) (Section~\ref{sct:detection}).

\begin{figure*}[t]
	\centering

         \includegraphics[width=0.9\linewidth]{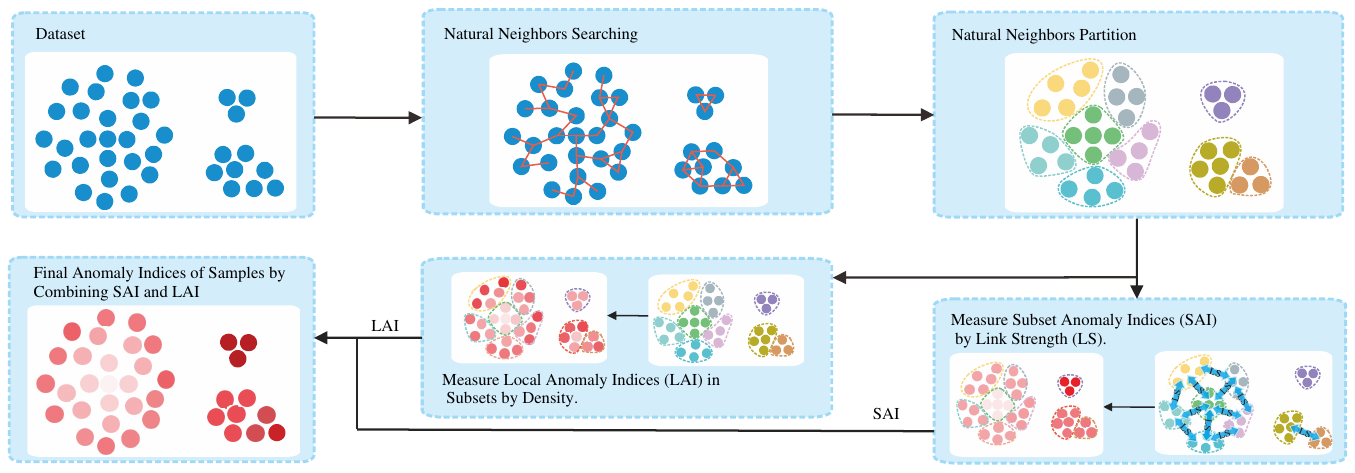}
   
\caption{Pipeline of the proposed method. Natural neighbor subsets containing natural neighboring samples are constructed based on the NB definition in Section~\ref{sct:pre}. Then, the Subset Anomaly Index (SAI) and Local Anomaly Index (LAI) are computed to comprehensively reflect the sample abnormality under the complex co-occurrence of scatterliers and clusterliers. SAI is derived from the degree of adjacency (i.e., link strength) among subsets, indicating the subset-level abnormality of samples. LAI measures the local abnormality of samples within each subset based on distribution density. These two indices collectively enable the detection of outliers without bias and clusterliers' masking effect.}
	\label{figure-framework}
\end{figure*}
\subsection{Natural Neighbor Subset Exploration}\label{sec_subset_p}

In this subsection, a dataset with $n$ data samples $X=\{x_1,\ldots,x_i,\ldots,x_n \}$ is divided into a series of $m$ natural neighbor (NB) subsets $S=\{s_1,\ldots,s_m\}$, which will be treated as micro reference sets and help construct macro graph reference sets in the following. The NB subsets exploration starts from dense samples according to the NB definition. Note that the final number of subsets $m$ is automatically determined by the NB relationships and data distribution.

Before the subset partition, it is necessary to measure the local density of samples for obtaining the subset exploration priority. Traditionally, in $k$NN, the local density of a sample can be simply considered as the inverse of the average distance to its $k$ neighbors. However, for NBs, the number of neighbors for each sample varies, unlike a relatively fixed $k$ in $k$NN. Therefore, measuring local density should consider the quantity of NBs rather than relying solely on average distance. Accordingly, based on the NB definition in Definition~\ref{def:nb}, the local density of a sample is defined as follows.

\begin{definition}\label{def:den}
[Local Density for NBs - $\rho$] 
In the context of NB analysis, the local density of $x_i$ is defined as
\begin{equation}
    \label{equ_rou}
    \rho(x_i)= \frac{|\text{NB}(x_i)|}{\sum\limits_{x_j\in \mathrm{NB}(x_i)}dist(x_i,x_j)},
\end{equation}
where $|\mathrm{NB}(x_i)|$ is the number of samples in $\mathrm{NB}(x_i)$, and 
\begin{equation}
    dist(x_i, x_j)=||x_i-x_j||_2
\end{equation}
calculates the Euclidean distance between two sample vectors $x_i$ and $x_j$. 
\end{definition}

\begin{algorithm}[t]
% \fontsize{12}{5}\selectfont 
    \caption{Efficient natural neighbor search}
    \label{algo_NaN}
    \KwIn{Dataset $X=\{x_1, x_2,...,x_n\}$}
    \KwOut{Natural Neighborhood Graph ($\mathrm{NBG}$)}
    Initialize: $r = 1$, $\mathrm{NBG} = 0 \in \mathbb{R}^{n \times n}$ \;
    $\forall x_i \in X$, set the corresponding number of NBs by $\mathrm{NuB}(x_i) = 0$ \;
    Create a k-d tree $T$ from $X$ \;
    \While{True}{
        \For{each $x_i \in X$}{
            Fetch the $r$ nearest neighbors $\mathrm{kNN}_r(x_i)$ of $x_i$ according to $T$ \;
        }
        \For{each $x_i \in X$}{
            \For{each $x_j \in \mathrm{kNN}_r(x_i)$}{
                \If{$x_i \in \mathrm{kNN}_r(x_j)$}{
                    $\mathrm{NBG}_{ij} = 1$, $\mathrm{NBG}_{ji} = 1$ \;
                }
            }  % End of inner loop for $x_j$
            $\mathrm{NuB}(x_i) =$ number of NBs of $x_i$ \;
        }
        Count the number of samples without a natural neighbor, denoted as $Count$ \;
        \If{$Count == 0$ or remain unchanged}{
            \textbf{break}
        }
        Increase the hop range by $r = r + 1$ \;
    }
\end{algorithm}

A sample with more closely distributed NBs exhibits higher local density. The NB search follows the descending order of sample density, as low-density samples often lack natural neighbors. Placing them late in the queue enables early termination to save computation. The search uses a k-d tree \(T\) of \(X\). By gradually increasing \(r\), more neighbors are considered as NB candidates. Once NBs stabilize, the search ends, and the NB relationships form the Natural Neighborhood Graph (NBG) in Section~\ref{sct:pre}. The entire process is outlined in Algorithm~\ref{algo_NaN}.

With the searched NBs represented by the NBG, the whole dataset is partitioned into natural neighbor subsets containing naturally similar samples that will be utilized to form reference sets for accurate outlier detection. Since the isolated samples may have fewer or even no NBs, they form Natural neighbor subsets by themselves. Similar to the NB search, the exploration of Natural neighbor subsets also starts from the samples with higher density, and the samples with very low density are treated as singleton subsets. 

\begin{algorithm}[t]
	\caption{Natural neighbor subset exploration}
	\label{algo_subsetP}
        \KwIn{Dataset $X=\{x_1, x_2,...,x_n \}$ }
        \KwOut{A series of natural neighbor subsets $S$}
        %Initialize: $S=\emptyset$, upper limitation $U=\sqrt{n}$ \;
        Initialize: $S=\emptyset$, upper limit $U$ \;
        Obtain $\mathrm{NBG}$ by the efficient NB search algorithm in Algorithm~\ref{algo_NaN} \;
        Obtain density $\rho(x_i)$  of each sample $x_i$  by Eq.~\eqref{equ_rou}\;
        Initialize unassigned set $R=\{x_i\mid x_i \in X  \cap \rho(x_i) > 0 \}$\;
        \While{$R \neq \emptyset $}{
            $x_m = \arg\max_{x_i \in R}  \rho(x_i) $ \;
            $R = R \setminus x_m $ \;
            Initialize a new subset $s_{new} = \emptyset$ \;
            $s_{new} = s_{new} \cup x_m $\;
            \For{each $x_j$ in $s_{new}$}{
            Obtain NB($x_j$) by $\mathrm{NBG}$\;
            Add unassigned neighbors in NB($x_j$) to $s_{new}$ and remove them from $R$\;
            \lIf{$size(s_{new}) \geq U$}{\textbf{break}}
            }
            $S = S \cup s_{new}$ \;
        }
\end{algorithm}

More specifically, for the samples outside the currently formed subsets, the one with the highest density $\rho$ is selected, and a new natural neighbor subset $s_{\text{new}}$ is created for it. To enforce strict similarity within subsets, the natural neighbors of all samples in $s_{\text{new}}$ are aggregated into a candidate set and then merged into $s_{\text{new}}$. The current subset \( s_{\text{new}} \) is finalized and closed once its size reaches a predetermined upper limit \( U \). This parameter acts as a termination threshold to avoid oversized subsets that would otherwise compromise local similarity. A typical setting for \( U \) is \( \sqrt{n} \) \cite{ester1996density}, a choice supported by the natural neighbor property that ensures high intra-subset similarity. This value also aligns with the empirical rule widely adopted in density-based clustering to balance subset granularity and computational efficiency. Specifically, closure occurs if $|s_{\text{new}}| \geq U$ or if no unassigned natural neighbors remain (i.e., $R = \emptyset$). The process then continues recursively using the same procedure to form subsequent subsets until all samples are assigned. The complete natural neighbor subset exploration process is outlined in Algorithm~\ref{algo_subsetP}.

\subsection{Dual Reference Sets-based Anomaly Index} \label{sec_as}

All the NBs in $S$ obtained through Algorithm~\ref{algo_subsetP} are treated as micro reference sets named Natural Neighbor Reference Subsets (NRS) to reflect the local anomaly degree of samples without being masked by the surrounding clusterliers. Specifically, in each NRS, the abnormality of each sample is measured in terms of density, where lower density indicates a higher anomaly. The Local Anomaly Index (LAI) of a sample is determined by the difference between its density and the density peak in the subset to which it belongs, which can be written as
\begin{equation}
    \label{equ_LAS}
    \mathrm{LAI}(x_i)=\rho_{max}-\rho(x_i),
\end{equation}
where $\rho_{max}$ is the max local density in $s_m$ that $x_i$ belongs to. In the micro cluster composed of clusterlier samples, the center point has a relatively high density. If the scatterliers are in the same NRS, they can be effectively identified due to their high LAI value. 

Clusterliers, typically characterized as small and isolated clusters with a limited number of NRSs, are detected using NRS-based Graph Reference Sets (GRS). These graphs connect compactly distributed NRSs and serve as macro-level reference sets to reflect the anomaly degree of each entire subset (i.e., each NRS). To form the GRSs, Link Strength (LS) is defined to quantify the connectivity compactness between NRSs.
\begin{definition}\label{def:ls}
[Link Strength - LS] For two NRSs $s_m$ and $s_w$, the degree of their adjacency considers both the distance between the subsets and the number of NB relationships between their respective members. The LS between them is computed by
\end{definition}
\begin{equation}
    \label{equ_LS}
    \mathrm{LS}(s_m,s_w)=\frac{\mathrm{NBP}(s_m,s_w)}{dist(c_m,c_w)},
\end{equation}
where $\mathrm{NBP}(s_m, s_w)$ counts the number of sample pairs with the NB relationship between $s_m$ and $s_w$, $dist(c_m,c_w)$ computes the Euclidean distance between the two NRS centers $c_m$ and $c_w$, which are the sample means of $s_m$ and $s_w$ computed by 
\begin{equation}
    c_m=\frac{1}{|s_m|}\sum_{x_i\in s_m}x_i\ \ \text{and}\ \ c_w=\frac{1}{|s_w|}\sum_{x_j\in s_w}x_j,
\end{equation}
respectively. With LS, each NRS can be viewed as connected with NRSs that have an NB relationship to it, i.e., $\mathrm{NBP}(s_m,s_w)\neq 0$, and each graph formed in this way is denoted as a GRS.

With the constructed GRSs, the anomaly index of an NRS can be determined by the overall LS measured between it and all the other connected NRSs in the GRS. The Subset Anomaly Index (SAI) is computed and normalized into the range $[0, 1]$ by
\begin{equation}
    \label{equ_SAS}
    \mathrm{SAI}(s_{m})=1-norm\left(\sum_{s_w\in S,s_w\neq s_m}\mathrm{LS}(s_m,s_w)\right),
\end{equation}
where $norm(\cdot)$ indicates the min-max normalization function. Note that a higher SAI value indicates a lower LS connectivity, which indicates that the NRS is relatively isolated from the perspective of the other NRSs. Therefore, a higher SAI value reflects that all the samples in the NRS have higher anomaly degrees. Building upon these micro-level (LAI) and macro-level (SAI) anomaly indicators within NRSs and GRSs respectively, a comprehensive anomaly measurement is achieved through their integration into the Dual Reference Sets-based Anomaly Index. The detailed formulation of this combined metric is presented in Section~\ref{sct:detection}.

\subsection{Outlier Detection with Sampling Enhancement}\label{sct:detection}

The overall Dual Reference Sets-based Anomaly Index (DAI) of each sample is computed based on the LAI and SAI by
\begin{equation}
    \label{equ_AS}
    \mathrm{DAI}(x_i)=\mathrm{SAI}(s_m)+\beta(s_{m}) \cdot \mathrm{LAI}(x_i),
\end{equation}
where $s_m$ is the NRS to which $x_i$ belongs and $\beta(s_{m})$ controls the contribution of LAI. It should be noted that the two indices hierarchically reflect anomaly degrees, where the macro SAI is the basis reflecting the degree of the whole subset, while the micro LAI is the finer indicator revealing the position of each single sample within the subset. Accordingly, the weight of LAI is set at $\beta(s_{m}) = \mathrm{SAI}(s_{m})$.

The spatial patterns associated with different combinations of high and low SAI and LAI values are systematically analyzed to determine their indicative sample types. Four possible value configurations and their preliminary interpretations are summarized in Table \ref{tab:sai_lai_combinations}. 
\begin{table}[t]
\centering
\caption{Sample types under different SAI and LAI combinations.}
\scalebox{0.95}{
    \begin{tabular}{c|cc|c|l} 
    \toprule
    \textbf{Case} & \textbf{SAI} & \textbf{LAI} & \textbf{Anomaly Type} & \textbf{Description} \\
    \midrule
    1 & High & High & Scatterlier & \makecell[l]{Clear abnormality with dual \\evidence} \\
    \midrule 
    2 & High & Low & Clusterlier & \makecell[l]{Core member of anomalous\\ micro-cluster} \\
    \midrule
    3 & Low & High & Potential noise & \makecell[l]{Local isolation, global\\ normality} \\
    \midrule
    4 & Low & Low & Normal sample & \makecell[l]{No significant anomaly\\ evidence} \\
    \bottomrule
    \end{tabular}
}
\label{tab:sai_lai_combinations}
\end{table}
  These relationships are further elaborated in the following remarks:
\begin{remark}
  The interaction between LAI and SAI corresponds to four distinct sample types:
\end{remark}
\begin{enumerate}

\item   \textit{High LAI, High SAI:} Indicates both local and global isolation, confirming a \textit{scatterlier} with high confidence.

\item   \textit{High SAI, Low LAI:} Suggests global subset isolation with local cohesion, characteristic of a \textit{clusterlier} core member.

\item   \textit{High LAI, Low SAI:} Implies local isolation within a globally connected subset, often representing noise within normal clusters.

 \item   \textit{Low LAI, Low SAI:} Reflects full integration at both levels, indicating a normal sample.

\end{enumerate}

  These types underpin the development of DAI, which integrates LAI and SAI into a single anomaly score. To verify the practical value of DAI, the Dual Reference set-based Outlier Detection (DROD) algorithm is designed to compute DAI, and its effectiveness is experimentally validated through the visual analysis of DAI, LAI, and SAI distributions in Section~\ref{subsection_daivalidity}.

To enhance the robustness of DAI, a sampling enhancement mechanism is introduced, which performs \( T \) random samplings on the dataset \( X \) at a rate \( \eta \). The DAI is computed within each sampled subset and aggregated to produce the final anomaly score. The effectiveness of sampling is demonstrated in the following cases: 
\begin{enumerate}
    \item Normal samples and clusterliers, which are associated with relatively high density in their Natural Reference Sets (NRSs), remain largely unaffected by sampling;
    \item Scatterliers, which are sparse and form weakly connected NRSs and GRSs, become more isolated after sampling, leading to an increase in DAI scores. The overall sampling-enhanced DROD method is outlined in Algorithm~\ref{algo_OD1}.
\end{enumerate}

\begin{algorithm}[t]
	\caption{Dual reference set-based outlier detection}
	\label{algo_OD1}
        \KwIn{Original dataset $X=\{x_1, x_2,...,x_N \}$,  sampling rate $\eta$, sampling times $T$ }
        \KwOut{Anomaly index $\mathrm{DAI}_{X}$ of all the samples in $X$}
        $\mathrm{DAI}_{X_t} = 0 \in R^N$;\\
        Initialize $t$=0, upper limit T\\
        \While{$t \leq T$}{
        Obtain the \( t \)-th sampled dataset ${X}_{t}$ from $X$ with sampling rate $\eta$\;
        Obtain natural neighbor subsets $S$ and $\mathrm{NBG}$ on ${X}_{t}$ by Algorithm~\ref{algo_subsetP}\;
        Calculate $\mathrm{SAI}$ of each subset in $S$ by Eq.~\eqref{equ_SAS}\;
        Compute $\mathrm{LAI}$ of each sample in ${X}_{t}$ by Eq.~\eqref{equ_LAS}\;
        Calculate $\mathrm{DAI}_{X_t}$ of each sample in ${X}_{t}$ by Eq.~\eqref{equ_AS}\;
        \For{each $x_i$ in $X$}{ 
        $\mathrm{DAI}_{X_t}[x_i]$ = $\mathrm{DAI}_{X_t}[x_i]$ + $\mathrm{DAI}_{X}[x_i]$\;
            }
        $t = t + 1$\;
        }
\end{algorithm}

\begin{theorem}
The time complexity of DROD is $O(T \cdot N \cdot d \cdot \log N)$, where $T$ represents the number of sampling iterations, $N$ is the size of the original dataset, and $d$ is the dimensionality of the data.
\end{theorem}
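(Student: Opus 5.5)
The plan is to bound the cost of a single iteration of the while-loop in Algorithm~\ref{algo_OD1} by $O(n d \log n)$, where $n=\eta N\le N$ is the size of the sampled set $X_t$. Summing over the $T$ iterations then gives the claim. The key step is to count, stage by stage, the operations of Algorithms~\ref{algo_NaN} and \ref{algo_subsetP}. I will then show that computing LAI, SAI and DAI and accumulating them costs no more than that. Throughout I treat $\eta$ as a constant and regard $\lambda$, the final value of the hop range $r$, as bounded by a constant (or $O(\log n)$ absorbed in the usual k-d tree analysis). This is the standard assumption for natural neighbor search \cite{Zhu2016}: $\lambda$ stabilizes quickly and is small compared with $n$.

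\textbf{Algorithm~\ref{algo_NaN}.} Building the k-d tree $T$ costs $O(n d\log n)$. A $k$NN query costs $O(d\log n)$ expected time per sample when $k$ is small. Over the rounds $r=1,\dots,\lambda$, I would implement the search incrementally: fetch the $r$-th neighbor only, rather than recomputing $\mathrm{kNN}_r$. This makes the total query cost $O(\lambda n d\log n)=O(nd\log n)$. The mutual-neighbor test in the inner loops touches each of the $O(\lambda n)$ candidate pairs once. With a hash/membership structure each test is $O(1)$, so the test costs $O(n)$ per round. I store NBG sparsely as an adjacency list with $O(\lambda n)$ edges, not as the dense $n\times n$ matrix written in the pseudocode. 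Counting samples without neighbors is $O(n)$ per round.

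\textbf{Algorithm~\ref{algo_subsetP} and the scores.} Each density $\rho(x_i)$ in Eq.~\eqref{equ_rou} sums over $|\mathrm{NB}(x_i)|\le\lambda$ distances, so all densities together cost $O(\lambda n d)$. The repeated $\arg\max$ over $R$ is handled by sorting once by $\rho$, at cost $O(n\log n)$, and then skipping already-assigned samples. The BFS-like expansion visits every vertex and edge of NBG once, at cost $O(\lambda n)$. The subset centers $c_m$ cost $O(nd)$ in total. For $\mathrm{NBP}$, I scan each NBG edge once and increment a counter keyed by the pair of subset labels, which costs $O(\lambda n)$. This yields at most $O(\lambda n)$ subset pairs with $\mathrm{NBP}\neq0$, and each needs one distance computation in $O(d)$. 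Hence LS in Eq.~\eqref{equ_LS}, its row sums, the min--max normalization and SAI in Eq.~\eqref{equ_SAS} together cost $O(\lambda n d)$. LAI in Eq.~\eqref{equ_LAS} needs one maximum per subset, $O(n)$ in total, and DAI in Eq.~\eqref{equ_AS} is $O(n)$. The accumulation loop over $X$ is $O(N)$.

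Combining these, one iteration costs $O(N d\log N)$ because $n\le N$, and $T$ iterations give $O(T\cdot N\cdot d\cdot\log N)$. I expect the main obstacle to be the k-d tree step. The $O(d\log n)$ query bound holds only in an expected or low-dimensional sense; worst-case k-d tree queries degrade toward $O(dn^{1-1/d})$. Bounding $\lambda$ also needs an assumption. I would state explicitly that the theorem uses the standard balanced k-d tree query cost and treats $\lambda$ as a constant, as is conventional in the natural-neighbor literature. I would also point out that a careless implementation of the dense NBG or of all-pairs LS would cost $O(n^2)$, and the sparse implementation choices above are what avoid that.
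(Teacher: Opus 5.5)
Your proposal is correct and follows the paper's own argument. You bound the work on one sampled set of size $n=\eta N$ by the k-d-tree neighbor search, $O(nd\log n)$, check that density, subset construction, SAI, LAI and DAI are dominated by it, and multiply by $T$. Your version is more careful than the paper's sketch: it states the assumptions the paper leaves implicit (small $\lambda$, expected k-d-tree query cost) and the implementation choices without which an iteration could become quadratic in $n$ (sparse NBG instead of the pseudocode's $n\times n$ matrix, one sort instead of repeated $\arg\max$, LS only over pairs with $\mathrm{NBP}\neq 0$). The one loose end is your parenthetical allowance $\lambda=O(\log n)$: your own count $O(\lambda nd\log n)$ does not absorb it, so either drop it or account for the search as one $\lambda$-nearest-neighbor query per point.
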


\begin{proof}
DROD repeatedly and systematically evaluates multiple random samplings. For each sampled dataset $X_t$ (with size $n = \eta \cdot N$, where $0 < \eta \leq 1$), the evaluation includes mainly obtaining natural neighbor subsets $S$ via the proposed method, and calculating the three anomaly indices: SAI, LAI, and DAI.

The time complexity for obtaining $S$ using a $k$-d tree is approximately $O(n \cdot d \cdot \log n)$. This includes neighbor search using the tree structure ($O(n \cdot d \cdot \log n)$), local density measurement ($O(n)$), and natural neighbor subset construction ($O(n)$). For index calculation: SAI requires $O(n \cdot d)$ operations for distance computations to subset centroids, while both LAI and DAI require $O(n)$ each. Thus, the total time complexity per sampled dataset is approximately $O(n \cdot d \cdot \log n)$. 
\end{proof}

\section{Experiments}\label{sec_exp}
To illustrate the effectiveness of the proposed DROD unsupervised outlier detection method, eight experiments are conducted: 1) outlier detection performance evaluation (Section~\ref{subsection_OD_4}) on both real benchmark and synthetic datasets with significance tests, 2) computation efficiency evaluation (Section~\ref{subsection-efficiency}), 3) evaluation of NRS and alternative reference sets (Section~\ref{subsection_nrs}), 4) validation of potential distance metric (Section~\ref{subsection_distance}), 5) ablation studies (Section~\ref{subsection_ablation}), 6) enhancement on downstream clustering task (Section~\ref{subsection_downstream}), 7) hyper-parameter evaluations (Section~\ref{subsection_parameter}), and 8) study of the relationships among DAI, SAI, and LAI (Section~\ref{subsection_daivalidity}). The experimental settings are first introduced, followed by a comprehensive presentation of results along with in-depth analysis.

\subsection{Experimental Settings}\label{subsection_OD_1}
This section introduces the dataset and the corresponding experimental design, evaluation metrics, and compared methods. All experiments were conducted on a workstation with 32 GB RAM and a 2.3 GHz 11th Gen Intel(R) Core(TM) i7-11800H CPU using Python 3.11.

\subsubsection{Datasets}

\begin{table}[t]
\centering
	\caption{Statistics of 20 real benchmark datasets.}
	\scalebox{0.85}{
	\begin{tabular}{l|rrrrr} % centered columns (4 columns)
	\toprule
	 \textbf{Real dataset}                 & \textbf{Samples} & \textbf{ Features} & \textbf{ Anomaly(\%)} & \textbf{Reference} \\
	\midrule
PageBlocks         & 5393    & 10       & 510 (9.46)       & \cite{malerba1996further}    \\
WPBC             & 198     & 33       & 47 (23.74)        & \cite{mangasarian1995breast}      \\
mnist            & 7603    & 100      & 700 (9.21)       & \cite{lecun1998gradient}   \\
musk           & 3062    & 166      & 97 (3.17)        & \cite{dietterich1993comparison}            \\
Ionosphere        & 351     & 33       & 126 (35.90)       &  \cite{sigillito1989classification}                \\
Waveform           & 3443    & 21       & 100 (2.90)       & \cite{loh2011classification}  \\
% Cardiotocography 简写为Cardiotoco.
Cardiotoco.   & 2114    & 21       & 466 (22.04)       &\cite{ayres2000sisporto}                   \\
cardio           & 1831    & 21       & 176 (9.61)       & \cite{ayres2000sisporto}               \\
landsat                         & 6435    & 36       & 1333 (20.71)      &\cite{meta_analysis}                  \\
optdigits                            & 5216    & 64       & 150 (2.88)       &\cite{alpaydin1998cascading}               \\
pendigits                            & 6870    & 16       & 156 (2.27)       &\cite{alimoglu1997combining}                \\
speech                               & 3686    & 400      & 61 (1.65)        & \cite{brummer2012description}               \\
thyroid                              & 3772    & 6        & 93 (2.47)        &\cite{quinlan1987inductive}              \\
Pima          & 768     & 8        & 268 (34.90)       & \cite{Rayana2016}                       \\
satellite        & 6435    & 36       & 2036 (31.64)      & \cite{Rayana2016}           \\
satimage-2                           & 5803    & 36       & 71 (1.22)        & \cite{Rayana2016}            \\
vowels                               & 1456    & 12       & 50 (3.43)        &     \cite{kudo1999multidimensional}     \\
Seismic                               & 2584    & 11       & 170 (6.58)        &  \cite{rod2022}       \\
Banknote                               & 1372    & 4       & 610 (44.46)        &  \cite{rod2022}      \\
%HeartDisease 简写为HeartDis.
HeartDisease                               & 270    & 13       & 120 (44.44)        &  \cite{campos2016evaluation}      \\
    \bottomrule
	\end{tabular}
	}
	\label{table:datasets} % is used to refer to this table in the text
\end{table}

\begin{figure}[t]
	\centering
	\includegraphics[width=0.9\linewidth]{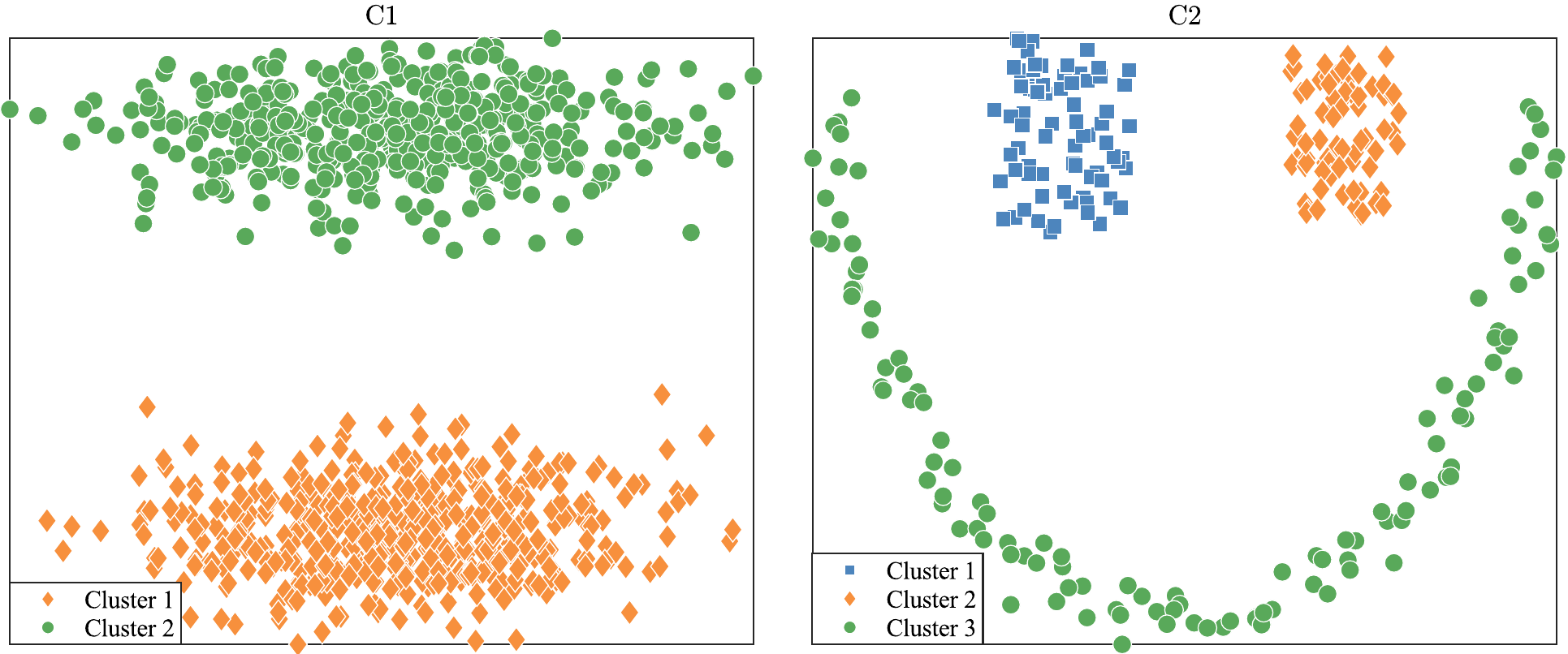}
	\caption{Two basic synthetic datasets C1 and C2 for generating various types of outliers.}
	\label{figure-clusters}
\end{figure}

To comprehensively evaluate the performance of our method, 20 real benchmark datasets and 12 synthetic benchmark datasets with generated outliers are utilized in the experiments. The statistical information and source of the real datasets are summarized in Table~\ref{table:datasets}. All the 12 synthetic datasets are generated based on two commonly used outlier detection datasets denoted as C1 and C2, composed of clusters in different shapes\footnote{https://github.com/milaan9/Clustering-Datasets}. The two base datasets are visualized in Fig.~\ref{figure-clusters}. The samples in C1 and C2 are treated as normal, and new samples are generated as outliers encompassing two types: scatterliers and clusterliers. The generation mechanisms are as follows:
\begin{itemize}\label{item_outliersgen}
	\item \relax Scatterliers: The value for each dimension of an outlier is randomly generated from a uniform distribution, $Unif(x_{mean} - 1.5 * range, x_{mean} + 1.5 * range)$, where $x_{mean}$ denotes the mean of all normal data points, and $range$ represents the maximum distance of any point from the mean, specifically, $range = \max\!\big(|x_{max} - x_{mean}|,\; |x_{mean} - x_{min}| \big)$.
	\item \relax Clusterliers: A Gaussian distribution with designated mean and variance is utilized for clusterliers generation, where the size of the micro clusters containing clusterliers usually cannot exceed 10\% of the size of the whole dataset.
\end{itemize}

\begin{figure*}[t]
  \centering
  \includegraphics[width=0.9\linewidth]{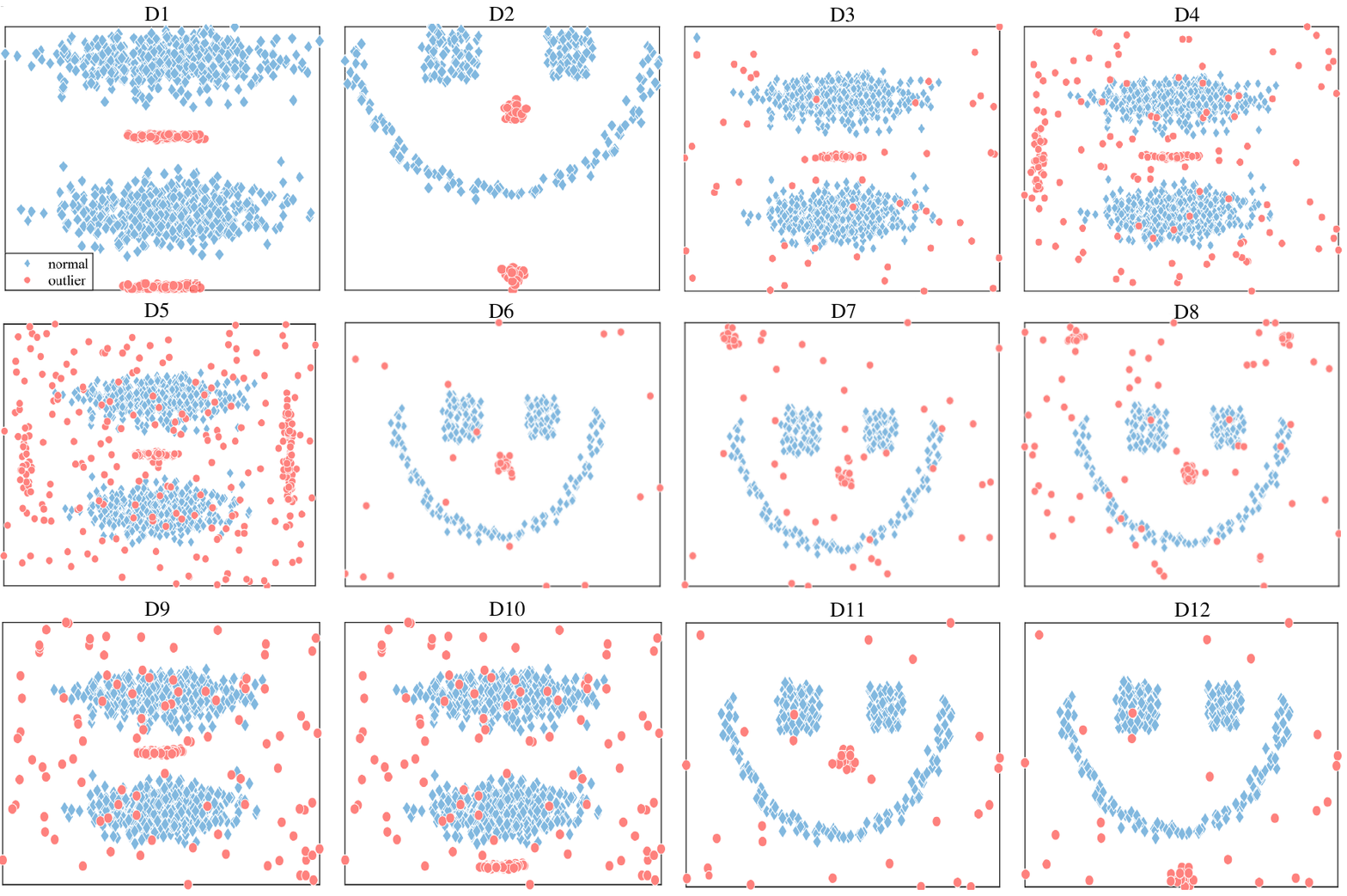}

    \caption{Visualization of synthetic datasets D1-D12.}

  \label{figureD12}
\end{figure*}

All 12 synthetic datasets with outliers are summarized in Table~\ref{table:D1D12}. A visualization of the datasets is provided in Fig.~\ref{figureD12}, where D1 and D2 are generated to evaluate the clusterlier detection performance of different methods, while D3 to D12 contain a mixture of scatterliers and clusterliers.
\begin{table}[t]
\centering
%\footnotesize
	\caption{Statistics of 12 synthetic datasets and the counts of scatterliers and clusterliers.} % title of Table
	%\footnotesize
	\scalebox{1}{
	\begin{tabular}{l |c|cr} % centered columns (4 columns)
	\toprule
	 \textbf{Dataset}   & \textbf{Normal Samples} & \textbf{Scatterliers} & \textbf{Clusterliers} \\
	\midrule
D1         & 1000    & 0       & (100,100)           \\
D2         & 266     & 0       & (26,26)           \\
D3         & 1000    & 50       & (50)           \\
D4         & 1000    & 100       & (50,50)           \\
D5         & 1000    & 150       & (30,50,70)           \\
D6         & 266    & 20       & (20)           \\
D7         & 266    & 40       & (20,20)           \\
D8         & 266    & 60       & (10,20,30)           \\
D9         & 1000    &100        & (50)           \\
D10         & 1000    &100        & (50)           \\
D11         & 266    &26        & (26)           \\
D12         & 266    &26        & (26)           \\
    \bottomrule
	\end{tabular}
	}
	\label{table:D1D12} % is used to refer to this table in the text
\end{table}
\subsubsection{Evaluation Metrics}\label{subsection_OD_2}

Four validity indices have been utilized to evaluate the performance of the compared methods, which are described as follows:

\begin{itemize}
    \item \textbf{Area Under the ROC Curve (AUC): }In the field of outlier detection, the Area Under the Receiver Operating Characteristic (ROC) Curve (AUC) is one of the most popular evaluation metrics. The ROC curve plots the true-positive rate against the false-positive rate over varying thresholds. Its value ranges in $[0,1]$ and reflects the sorting quality of an algorithm. The higher the AUC, the better the performance of the outlier detector.

    \item \textbf{Precision-$s$: }The Precision-$s$ \cite{campos2016evaluation}, a specialized variant of the standard Precision metric, is defined as the proportion of correct results within the top $s$ ranks, serves as the second evaluation metric. This metric requires the setting of the threshold $s$ to match the number of outliers in the ground truth.

    \item \textbf{Davies-Bouldin Index (DBI): }The Davies-Bouldin Index (DBI) \cite{DBI}, which considers both intra-cluster and inter-cluster sample similarity, is used to measure clustering performance, and a smaller value indicates better performance.

    \item \textbf{Wilcoxon Signed-Rank Test: }Additionally, the Wilcoxon signed-rank test \cite{Wilcoxon2013}, a nonparametric statistical method, is used to reveal significant differences between algorithms. The null hypothesis assumes there are no differences between the algorithms. If the hypothesis is rejected in the comparison of DROD and a counterpart where DROD performs better, it indicates that DROD demonstrates significant superiority in comparison with the counterpart.

\end{itemize}

\subsubsection{Compared Methods}\label{subsection_OD_3}

The proposed DROD method is compared with eight representative methods, including conventional baselines and advanced recent approaches. It is worth noting that the focus is on unsupervised outlier detection, where most existing label-guided outlier detection methods are not applicable. Therefore, all compared methods are either specifically designed for unsupervised scenarios or demonstrate effective capability in such settings. Regarding experimental parameters, the proposed method employs a sampling rate $\eta=0.8$ and sampling times $T=60$, which will be validated in the subsequent hyper-parameter sensitivity evaluations.

Specifically, three $k$NN-based methods are compared: $k$NN, LOF \cite{lof}, and DGOF \cite{DGOF}. All of these methods utilize the state-of-the-art KFC \cite{NA} framework to automatically determine the optimal $k$ value. Additionally, five other methods are evaluated: CBLOF \cite{CBLOF,CBLOFNEW}, OCSVM \cite{OCSVM,OCSVMNEW}, IFOREST \cite{D-iforest}, COPOD \cite{COPOD}, and ECOD \cite{Li2022ECOD}. For CBLOF, the parameter is set to ``n\_clusters=10''. For OCSVM, the kernel width $c$ is determined through cross-validation, and the $\nu$ value is selected based on the tolerance for outliers. For IFOREST, the parameter is set to ``estimators=10''. The remaining two advanced methods, ECOD and COPOD, are parameter-free. All parameters of the compared methods, where applicable, are configured according to the recommendations in their original publications to ensure a fair comparison.

\subsection{Outlier Detection Performance Evaluation } \label{subsection_OD_4}

The outlier detection performance of different methods is evaluated on complex real benchmark datasets and synthetic datasets with specifically generated scatterliers and clusterliers.
\begin{table*}[t]
  \centering
  \caption{AUC comparison of 9 methods on D1 and D2 with \textbf{only clusteriers}. The best results are emphasized in \textbf{bold}.}
\resizebox{\linewidth}{!}{
    \begin{tabular}{l|ccccccccc}
    \toprule
      \textbf{Datasets}         & \textbf{COPOD} & \textbf{ECOD} & \textbf{OCSVM} & \textbf{IFOREST} & \textbf{$k$NN+KFC} & \textbf{LOF+KFC} & \textbf{DGOF+KFC} & \textbf{CBLOF} & \textbf{DROD}  \\ 
   &\textbf{\cite{COPOD}'2020} & \textbf{\cite{Li2022ECOD}'2022} &\textbf{\cite{OCSVMNEW}'2023} & \textbf{\cite{D-iforest}'2023} & \textbf{\cite{NA}'2024} & \textbf{\cite{NA}'2024} & \textbf{\cite{NA}'2024} & \textbf{\cite{CBLOFNEW}'2025} & \textbf{ours}\\
    \midrule
        D1 & 0.5070 & 0.4149 & 0.4932 & 0.6759 & 0.1266 & 0.4887 & 0.3562 & 0.2415 & \textbf{0.8330} \\ 
        D2 & 0.5227 & 0.3758 & 0.5000 & 0.5454 & 0.8228 & 0.5856 & 0.6076 & 0.0753 & \textbf{0.9180}  \\ 
        \midrule  
        Avg. AUC & 0.5149 & 0.3954 & 0.4966 & 0.6107 & 0.4747 & 0.5732 & 0.4819 & 0.1573 & \textbf{0.8755} \\   
    \bottomrule
    \end{tabular}%
  }
  \label{table:D1D2}
\end{table*}
\subsubsection{Clusterliers Detection Performance Evaluation}

The AUC results on D1 and D2 are reported in Table~\ref{table:D1D2}. Both datasets contain only cluster-type anomalies, meaning that anomalous samples are densely grouped rather than individually isolated. In this setting, most compared methods produce AUC values close to 0.5, indicating that they are largely unable to distinguish clustered anomalies from normal data. This is consistent with the known limitation of local density–based models such as LOF, ECOD, and COPOD, in which mutually similar anomalies tend to reinforce each other and thus remain undetected. In contrast, DROD consistently achieves the highest AUC on both datasets. This superior performance can be attributed to the macro reference mechanism, where clusterliers are grouped into subsets that exhibit weak connectivity to the global structure, resulting in a high SAI response. Although IFOREST is not specifically designed for clustered anomalies, it attains the second-best result on D1 because its recursive partitioning strategy may accidentally isolate marginal clusters, partially revealing their abnormality. Similarly, $k$NN+KFC performs competitively on D2 due to the automatically optimized neighborhood size, which occasionally captures the boundary structure of compact anomaly clusters. These observations confirm that traditional single-scale detectors are prone to masking effects under densely grouped anomalies, whereas DROD avoids such limitations by explicitly evaluating the global separability of subset structures.

\subsubsection{Scatterliers and Clusterliers Detection Evaluation}

Fig.~\ref{figure-zhu} reports the AUC results of all nine methods on synthetic datasets D1–D12. These datasets progressively incorporate both scatterliers and clusterliers in varying proportions and spatial configurations, making them suitable for evaluating robustness under heterogeneous anomaly patterns. Overall, DROD consistently maintains the highest AUC across most datasets, indicating that its dual reference formulation is capable of adaptively capturing both types of anomalies under mixed distributional settings. A closer inspection reveals that certain conventional methods exhibit sensitivity to the dominant anomaly type. For example, $k$NN and IFOREST perform comparatively well on D4, where anomalies are moderately separated and the local neighbor structure remains partially intact. DGOF achieves competitive results on D6 and D7, where the proportion of scatterliers is small and clusterliers are relatively compact. In contrast, OCSVM and ECOD show strong performance on D10, which contains clearer global separation between anomalous and normal clusters. These method-specific fluctuations suggest that single-scale or single-assumption detectors tend to succeed only when their inductive bias matches the underlying anomaly structure. It is worth noting that all synthetic datasets are intentionally designed to share identical normal cluster structures (as shown in Fig.~\ref{figure-clusters}), with only the anomaly patterns varying. This ensures that DROD is evaluated under controlled anomaly distribution shifts, where its stability is preserved by design. This behavior is aligned with its design: the dynamic combination of LAI and SAI avoids over-committing to either local or global evidence, thus mitigating the brittleness commonly observed in purely local or global detectors.
\begin{figure}[t]
	\centering
	\includegraphics[width=0.9\linewidth]{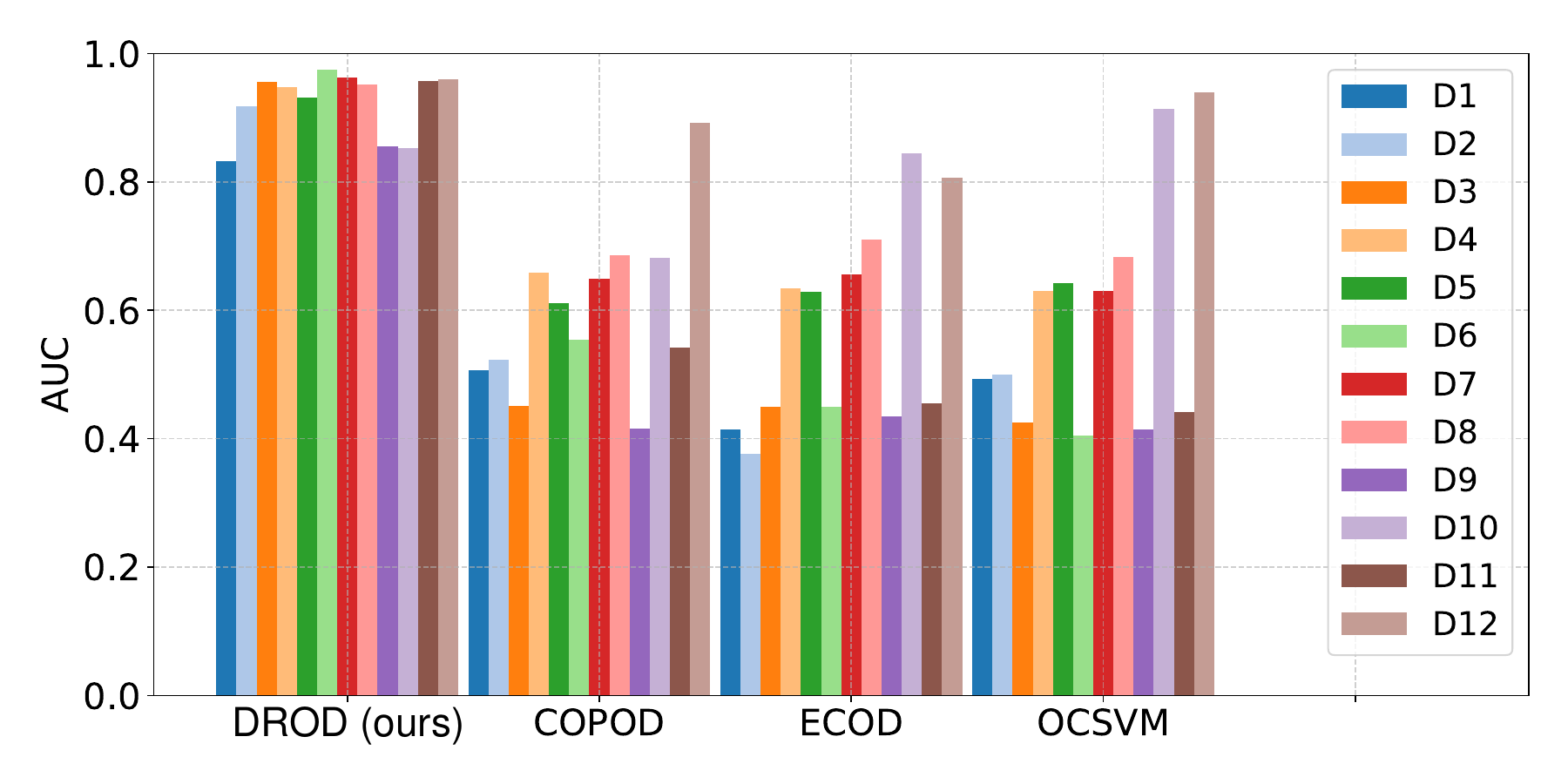}
        \includegraphics[width=0.9\linewidth]{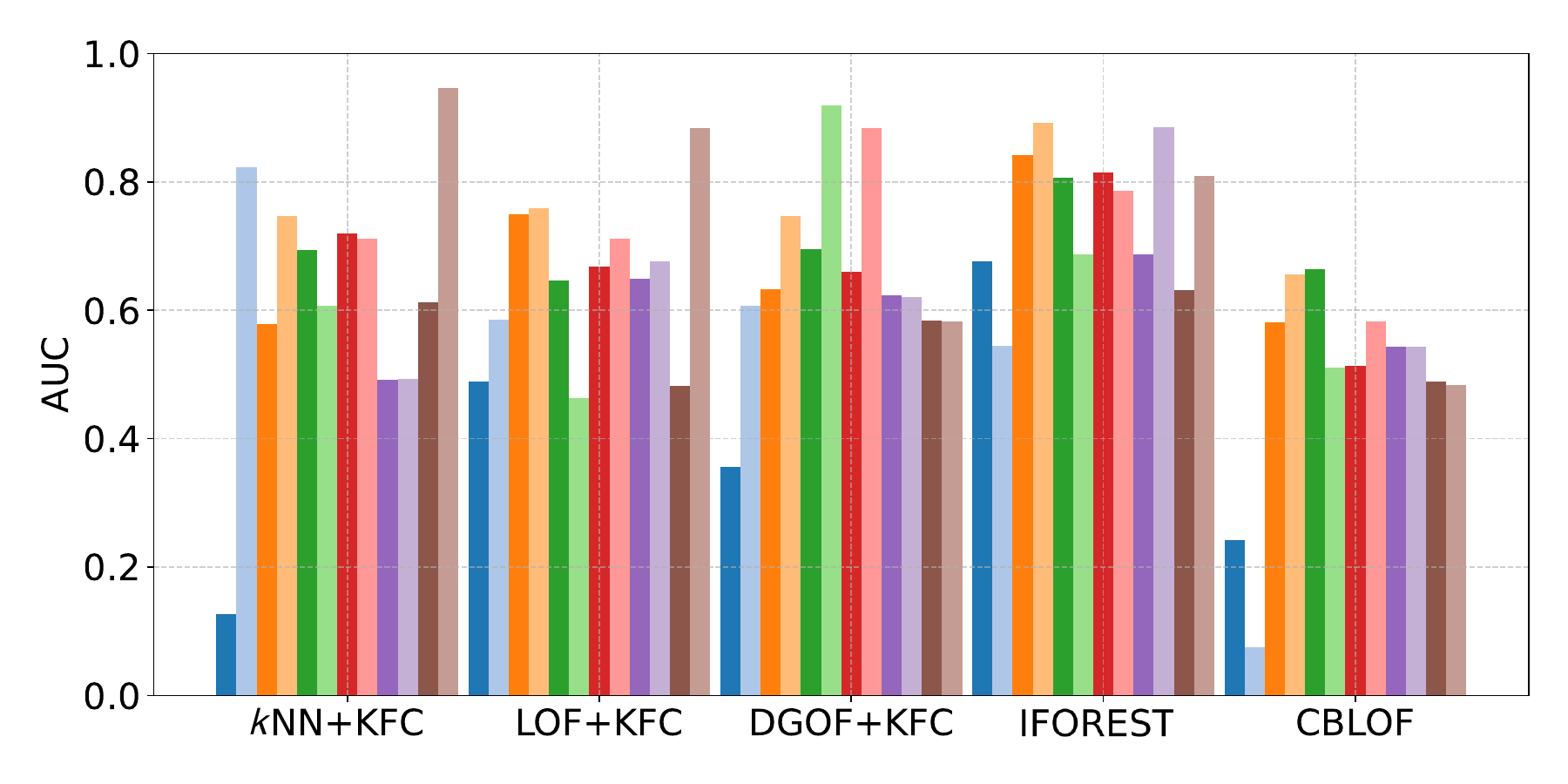}
	\caption{AUC comparison on D1-D12. D1 and D2 are with \textbf{only clusterliers}. The ten datasets D3-D12 are with \textbf{both scatterliers and clusterliers}.}
	\label{figure-zhu}
\end{figure}
\begin{table*}[!t]
  \centering
  \caption{AUC of 9 methods on 20 real benchmark datasets. The performance rank (the lower, the better) is shown in parentheses, and the best results are emphasized in \textbf{bold}.
  }
\resizebox{\linewidth}{!}{
    \begin{tabular}{l|ccccccccc}
    \toprule
     \textbf{Datasets}         & \textbf{COPOD} & \textbf{ECOD} & \textbf{OCSVM} & \textbf{IFOREST} & \textbf{$k$NN+KFC} & \textbf{LOF+KFC} & \textbf{DGOF+KFC} & \textbf{CBLOF} & \textbf{DROD}  \\ 
   &\textbf{\cite{COPOD}'2020} & \textbf{\cite{Li2022ECOD}'2022} &\textbf{\cite{OCSVMNEW}'2023} & \textbf{\cite{D-iforest}'2023} & \textbf{\cite{NA}'2024} & \textbf{\cite{NA}'2024} & \textbf{\cite{NA}'2024} & \textbf{\cite{CBLOFNEW}'2025} & \textbf{ours}\\
    \midrule
        Pageblocks   & 0.8754(6) & 0.9139(2) & 0.8903(5) & 0.8977(4) & 0.8061(8) & 0.7512(9) & 0.8413(7) & 0.9083(3) & \textbf{0.9154(1)}\\
        WPBC & 0.5233(4) & 0.4813(7) & 0.4743(8) & 0.4938(5) & 0.5323(2) & 0.4932(6) & \textbf{0.5600(1)} & 0.4686(9) & 0.5249(3) \\
        mnist & 0.7739(8) & 0.7463(9) & 0.8204(3) & 0.8008(5) & \textbf{0.8655(1)} & 0.8019(4) & 0.7829(7) & 0.7913(6) & 0.8603(2) \\
        musk & 0.9463(7) & 0.9559(6) & 0.8094(8) & 0.9996(4) &\textbf{1.0000(2)} & 0.8014(9) & \textbf{1.0000(2)} & \textbf{1.0000(2)} & 0.9888(5) \\
        Ionosphere  &0.7895(7) & 0.7284(9) &0.7395(8) & 0.8503(4) & 0.8300(6) &0.8889(2)   &0.8379(5) & \textbf{0.9013(1)} &0.8598(3) \\
        Waveform   & 0.7339(5) & 0.6035(8) & 0.5393(9) & 0.7071(7) & 0.7755(2) & 0.7557(4) & 0.7691(3) & 0.7212(6) & \textbf{0.8755(1)} \\
        %Cardiotocography简写为Cardiotoco.
        Cardiotoco. & 0.6629(7) & 0.7853(2) & \textbf{0.7872(1)} & 0.6940(4) & 0.6651(6) & 0.6439(8) & 0.6682(5) & 0.6278(9) & 0.7431(3) \\
        cardio & 0.9219(6) & 0.9350(2) & 0.9286(5) & 0.9299(4) & 0.8499(8) &0.6766(9) & 0.9322(3) & 0.8605(7) &\textbf{0.9368(1)} \\
        landsat & 0.4215(7) & 0.3678(8) & 0.3660(9) & 0.4872(6) &0.5929(2) & 0.5514(4) & 0.5553(3) & 0.5170(5) & \textbf{0.6290(1)} \\
        optdigits  & 0.6824(4) & 0.6045(5) & 0.5336(6) & 0.7140(3) & 0.5249(7) & 0.3921(8) & 0.2776(9) & 0.7547(2) & \textbf{0.9802(1)} \\
        pendigits  & 0.9048(5) & 0.9274(4) & 0.9354(3) &0.9481(2) & 0.8723(7) &0.4763(9) & 0.8305(8) & 0.8912(6) &\textbf{0.9537(1)} \\
        speech & 0.4911(3) & 0.4697(7) & 0.4639(8) & 0.4905(4) & 0.4773(5) & \textbf{0.6426(1)} & N/A(N/A) & 0.4723(6) & 0.5931(2) \\
        thyroid & 0.9393(7) & 0.9771(2) & 0.8437(8) & \textbf{0.9799(1)} & 0.9515(4) & 0.8075(9) & 0.9582(3) & 0.9428(5) & 0.9400(6) \\ 
        Pima & 0.6540(6) & 0.5944(8) & 0.6022(7) & 0.6752(4) & \textbf{0.7255(1)} & 0.5827(9) & 0.6967(3) & 0.6700(5) & 0.7215(2)  \\
        satellite & 0.6335(5) & 0.5830(8) & 0.5972(7) & 0.7076(3) & 0.6716(4) &0.5379(9) & 0.6136(6) & 0.7373(2) & \textbf{0.7873(1)}  \\ 
        satimage-2  & 0.9745(5) & 0.9649(6) & 0.9747(4) & 0.9928(3) & 0.9223(7) & 0.5894(9) & 0.6424(8) & \textbf{0.9989(1)} & 0.9940(2)\\ 
        vowels  & 0.4958(9) & 0.5929(7) &0.5507(8) &0.7600(6) & \textbf{0.9771(1)} & 0.9269(4) &0.9631(3) & 0.9214(5) &0.9748(2)  \\
        Seismic & 0.7343(2) & 0.6974(6) & 0.6516(8) & 0.7216(3) & \textbf{0.7381(1)} & 0.5579(9) & 0.6892(7) & 0.7190(4) & 0.7180(5) \\ 
        Banknote & \textbf{0.6556(1)} & 0.5220(4) & 0.5075(9) & 0.5139(8) & 0.5408(3) & 0.5146(7) & 0.5208(6) & 0.5950(2) & 0.5217(5) \\ 
        % HeartDisease 简写为HeartDis.
        HeartDis. &0.6946(2) & 0.6083(7) & 0.5491(9) & 0.6233(6) &0.6821(3) &0.6511(5) &\textbf{0.7151(1)} & 0.5895(8) &0.6677(4)\\
        \midrule  
        Avg. Rank & 5.30 & 5.85 & 6.65 & 4.30 & 4.00 & 6.70 & 4.95 & 4.70 & \textbf{2.50} \\ 
    \bottomrule
    \end{tabular}%
}
  \label{tab:auc}%
\end{table*}

\begin{table*}[t]
  \centering
  \caption{Precision-$s$ of 9 methods on 20 benchmark datasets. The performance rank (the lower, the better) is shown in parentheses, and the best results are emphasized in \textbf{bold}.
  }
\resizebox{\linewidth}{!}{
    \begin{tabular}{l|ccccccccc}
    \toprule
   \textbf{Datasets}& \textbf{COPOD} & \textbf{ECOD} & \textbf{OCSVM} & \textbf{IFOREST} & \textbf{$k$NN+KFC} & \textbf{LOF+KFC} & \textbf{DGOF+KFC} & \textbf{CBLOF} & \textbf{DROD}  \\ 
   &\textbf{\cite{COPOD}'2020} & \textbf{\cite{Li2022ECOD}'2022} &\textbf{\cite{OCSVMNEW}'2023} & \textbf{\cite{D-iforest}'2023} & \textbf{\cite{NA}'2024} & \textbf{\cite{NA}'2024} & \textbf{\cite{NA}'2024} & \textbf{\cite{CBLOFNEW}'2025} & \textbf{ours}\\
    \midrule
        Pageblocks & 0.3353(9) & 0.4314(4) & 0.3670(8) & 0.4186(6) & 0.4310(5) & 0.3892(7) & 0.4324(3) & \textbf{0.6078(1)} & 0.5235(2)\\ 
        WPBC & 0.2128(3) & 0.1277(9) & 0.1702(6.5) & 0.1553(8) & 0.1773(5) & 0.1809(4) & 0.2199(2) & 0.1702(6.5) & \textbf{0.2340(1)} \\ 
        mnist & 0.2357(8) & 0.1800(9) & 0.3710(3) & 0.2990(6) & 0.4210(2) & 0.2502(7)  & 0.3510(4) & 0.3329(5) & \textbf{0.5443(1)}  \\ 
        musk & 0.3608(6) & 0.4948(5) & 0.0223(9)  & 0.9495(2) & 0.5052(4) & 0.0464(7) & 0.0395(8) & \textbf{1.0000(1)} & 0.5361(3) \\ 
 
        Ionosphere & 0.5873(8) & 0.5317(9) & \textbf{0.8479(1)}  & 0.6706(7) & 0.7315(5) & 0.7659(4)  & 0.7262(6) & 0.8254(2) & 0.7698(3) \\ 
        Waveform & 0.0600(8) & 0.0400(9) & 0.0667(7)  & 0.2100(5) & 0.2150(4) & 0.1167(6) & 0.2550(2) & 0.2400(3) & \textbf{0.3900(1)}  \\ 
        Cardiotoco. & 0.3605(6) & \textbf{0.4957(1)}  & 0.2986(9) & 0.4043(4) & 0.3963(5) & 0.3119(7)  & 0.3094(8) & 0.4270(3) & 0.4442(2) \\ 
        cardio & \textbf{0.5284(1.5)} & \textbf{0.5284(1.5)} & 0.2595(8) & 0.5176(3) & 0.4138(6) & 0.2112(9) & 0.2879(7) & 0.5057(4) & 0.4943(5) \\ 
        landsat & 0.1800(8) & 0.1590(9) & 0.2532(5) & 0.2340(6) & 0.3003(2)& 0.2543(4)  & 0.2891(3) & 0.1845(7) & \textbf{0.3068(1)}  \\ 
        optdigits & 0.0133(7) & 0.0067(8) & 0.0422(3) & 0.0260(5) & 0.0322(4) & 0.0933(2)  & 0.0144(6) & 0.0000(9) & \textbf{0.7467(1)}  \\ 
        pendigits & 0.2628(3) & 0.3590(2) & 0.0748(7) & \textbf{0.3603(1)}  & 0.0855(6) & 0.0609(8)  & 0.0587(9) & 0.1154(5) & 0.1731(4) \\ 
        speech & 0.0328(4.5) & 0.0328(4.5) & \textbf{0.1065(1)}& 0.0213(7) & 0.0273(6) & 0.0492(3)   & N/A(N/A) & 0.0164(8) & 0.0755(2) \\ 
        thyroid & 0.1720(9) & 0.5484(2) & 0.2348(7) & \textbf{0.5817(1)}  & 0.2760(5) & 0.1774(8)  & 0.3208(4) & 0.2473(6) & 0.3763(3) \\ 
        
        Pima & 0.4888(6.5) & 0.4552(8) & 0.5479(2)& 0.5213(4) & \textbf{0.5485(1)} & 0.4447(9)  & 0.5062(5) & 0.4888(6.5) & 0.5448(3) \\ 
        satellite & 0.4804(5) & 0.4494(6) & 0.4202(8) & \textbf{0.5798(1)}  & 0.4951(4) & 0.3689(9)  & 0.4366(7) & 0.5693(3) & 0.5722(2)\\ 
        satimage-2 & 0.7465(3) & 0.6197(4) & 0.2629(8)  & 0.8704(2) & 0.5235(5) & 0.0892(9)  & 0.3122(7) & \textbf{0.9296(1)} & 0.4366(6) \\ 
        vowels & 0.0000(9) & 0.1800(7) & 0.6539(2)& 0.1600(8) & 0.4800(4) & 0.3333(5)  & 0.5300(3) & 0.3265(6) & \textbf{0.7600(1)}  \\ 
        Seismic & \textbf{0.2059(1)}  &0.1941(2) & 0.1559(6) & 0.1671(4) & 0.1637(5) & 0.0814(9)  & 0.1470(7) & 0.1294(8) & 0.1765(3) \\ 
        Banknote & \textbf{0.5443(1)}  & 0.4639(5) & 0.3956(9) & 0.4443(7) & 0.4676(4) & 0.4363(8)  & 0.4500(6) & 0.5131(2) & 0.4951(3) \\ 
        HeartDis. & 0.6083(2) & 0.5167(6) & 0.5292(4) & 0.5250(5) & 0.5306(3) & 0.4806(9)  & 0.4875(8) & 0.5083(7) & \textbf{0.6417(1)}  \\ 
        \midrule
        Avg. Rank & 5.43 & 5.55 & 5.68 & 4.60 & 4.25 & 6.70 & 5.53 & 4.70 & \textbf{2.40} \\   
    \bottomrule
    \end{tabular}%
}
  \label{tab:pr}%
\end{table*}

\begin{table}[t]
\centering
	\caption{The Wilcoxon signed-rank test results on real datasets using AUC as the metric.
 } % title of Table
    	\resizebox{1\linewidth}{!}{
	\begin{tabular}{l|lll|c} % centered columns (4 columns)
	\toprule
Counterparts & \textbf{R+} & \textbf{R-} & \textbf{p-value} & \textbf{null hypothesis} ($\alpha$=0.05)   \\
	\midrule
DROD vs  $k$NN+KFC   & 165    & 45       &0.0240     & \textbf{\textcolor[RGB]{0,200,50}{Reject}} \\
DROD vs  LOF+KFC   & 201     & 9       &0.0001   & \textbf{\textcolor[RGB]{0,200,50}{Reject}}      \\
DROD vs  DGOF+KFC  & 183     & 27       &0.0023      & \textbf{\textcolor[RGB]{0,200,50}{Reject}}   \\
DROD vs  IFOREST   & 191    & 19       &0.0006     & \textbf{\textcolor[RGB]{0,200,50}{Reject}}    \\
DROD vs  CBLOF   & 180     &30      &0.0037     & \textbf{\textcolor[RGB]{0,200,50}{Reject}}   \\
DROD vs  OCSVM  & 203    & 7       &0.0001        & \textbf{\textcolor[RGB]{0,200,50}{Reject}} \\
DROD vs  ECOD   & 192    & 18       &0.0005        & \textbf{\textcolor[RGB]{0,200,50}{Reject}}  \\
DROD vs  COPOD   & 185     & 25       &0.0017      & \textbf{\textcolor[RGB]{0,200,50}{Reject}}   \\
    \bottomrule
	\end{tabular}
	}
	\label{table:aucwilcox}
\end{table}

\subsubsection{Outlier Detection Performance Evaluation on Real Datasets}

Table~\ref{tab:auc} summarizes the AUC performance across 20 real-world datasets, and Table~\ref{tab:pr} provides complementary results in Precision-$s$. DROD achieves the best average ranking in both metrics. This indicates that its advantage is not limited to controlled synthetic settings, but also generalizes to diverse and noisy real data scenarios.
A closer examination shows that competing methods fluctuate more noticeably across datasets.
Their performance is affected by factors such as anomaly proportion, feature dimensionality, or cluster compactness. $k$NN+KFC and IFOREST perform competitively when anomalies exhibit clear boundary structures but degrade on datasets(e.g., landsat, WPBC, speech, etc.) where anomalies overlap with normal samples.
CBLOF and LOF variants underperform on scatterlier-dominated datasets(e.g., Waveform, WPBC, Cardiotoco., etc.), since they rely on assumptions related to cluster size or density concentration.
OCSVM and ECOD also show variable performance when the anomaly distribution is heterogeneous(e.g., satellite, satimage-2, cardio, etc.). DROD maintains stable top-tier ranking across most datasets. This consistency aligns with its dual reference mechanism. Local and global anomaly evidence are evaluated separately before being adaptively reweighted. The model does not depend on a single hypothesis about anomaly structure, which reduces the risk of failure when the true distribution differs from expectations. The Wilcoxon signed-rank results in Table~\ref{table:aucwilcox} further confirm that the performance improvement of DROD is statistically significant across all counterparts.

\subsection{Computation Efficiency Evaluation}\label{subsection-efficiency}

Two experiments are conducted to evaluate the computational efficiency of DROD, and the results are shown in Fig.~\ref{efficiency}. In the first experiment, the dimensionality is fixed at $d=36$, and the sample size $N$ increases from 1,000 to 6,000. In the second experiment, the sample size is fixed at $N=3,686$, and the dimensionality increases from 40 to 400 under a constant sampling iteration number $T=60$. Across both scalability settings, the runtime of DROD exhibits an approximately linear growth trend. This observation is consistent with its theoretical time complexity of $O(T \cdot N \cdot d \cdot \log N)$. 

In terms of dimensionality scalability, most compared methods show nearly linear runtime growth, while DROD grows at a notably slower rate. This is mainly because its candidate search and density estimation are performed on adaptively constructed local structures rather than full-dimensional dense representations. These results verify that the observed efficiency behavior is consistent with the algorithmic design. DROD achieves lower computational overhead in practice without sacrificing detection accuracy, making it suitable for large-scale and high-dimensional scenarios.

\subsection{Evaluation of NRS and Alternative Reference Sets}\label{subsection_nrs}
To comprehensively evaluate the proposed method, the NRS framework was compared with state-of-the-art techniques including GB \cite{GB}, HDIOD \cite{HDIOD}, and GNAN \cite{gnan}.
\begin{figure}[h]
	\centering
	\includegraphics[width=0.9\linewidth]{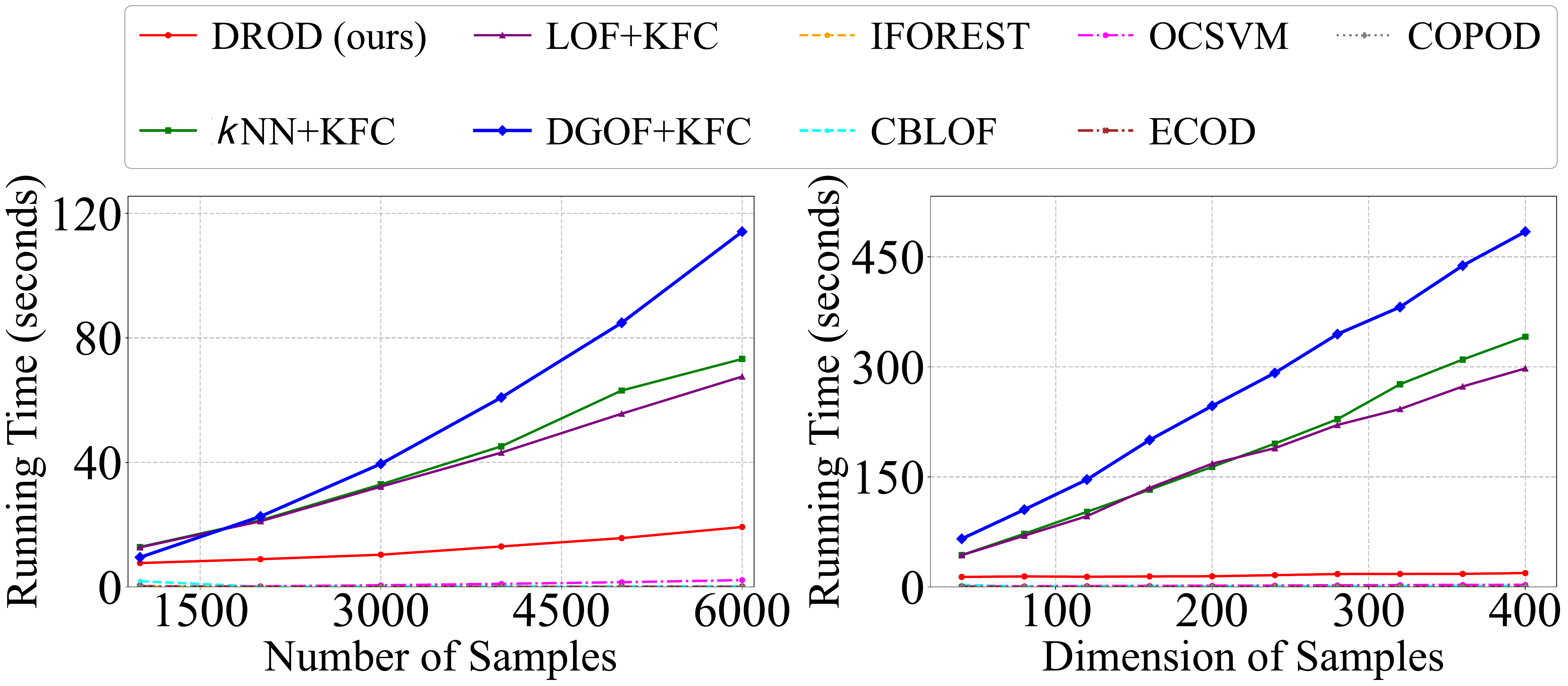}
	\caption{Runtime performance evaluation under two scalability scenarios: \textbf{Left}: varying the sample size from 1,000 to 6,000 with fixed dimensionality $d$=36; \textbf{Right}: varying the dimensionality from 40 to 400 with fixed sample size $n$=3,686.}
	\label{efficiency}
\end{figure}
The following insights highlight key comparative strengths and limitations:

As summarized in Table \ref{tab:auc_comparison_drod}, the NRS method achieves superior overall performance in anomaly detection by adaptively deriving reference subsets without hard partitioning or fixed neighborhood assumptions, thereby preserving local coherence and global relational structure; SAI can thus reflect genuine structural separation while LAI provides stable local density estimates. In comparison, GB relies on $K$-means with a fixed number of clusters for partitioning; when natural relations extend across cluster boundaries, such induced partitions may not fully respect cross-subset continuity, which can weaken global connectivity estimation. GNAN incorporates structural similarity, yet when local patterns become highly granular it may produce overly fine partitions, which tends to fragment higher-level relations and complicate macro-level anomaly scoring. HDIOD performs competitively in targeted settings but depends on manually specified $k$, so its effectiveness may vary under heterogeneous data distributions where the appropriate neighborhood size shifts across regions. Overall, these patterns suggest that methods grounded in a single partition scale or a fixed neighborhood assumption can be sensitive to anomaly morphology. However, the NRS framework jointly evaluates local (LAI) and global (SAI) evidence on adaptively formed subsets, enabling robust detection across both scatter-type and cluster-type anomalies with minimal parameter dependence.

\begin{table}[t]
  \centering
  \caption{AUC Results of Replacing DROD's Component with NRS and Other Competing Methods. The best results are highlighted in \textbf{bold} and the value of $k$ in HDIOD represents the parameter setting for the method.}
  \label{tab:auc_comparison_drod}
  \resizebox{\linewidth}{!}{% 自适应表格宽度，可根据需求删除以使用固定宽度
  \begin{tabular}{l|cccc}
    \toprule
      {Datasets} &   {NRS (ours)} &   {GB} &   {GNAN} &   {HDIOD} \\
    \midrule
      {Pageblocks} &   {\textbf{0.9154}} &   {0.4562} &   {0.5249} &   {0.8919 ($k$=6)} \\
      {WPBC} &   {0.5249} &   {0.4584} &   {0.4760} &   {\textbf{0.5942} ($k$=4)} \\
     {Ionosphere} &   {\textbf{0.8598}} &   {0.7271} &   {0.7586} &   {0.5867 ($k$=6)} \\
      {Cardiotocography} &   {0.7431} &   {0.5687} &   {0.5107} &   {\textbf{0.7831} ($k$=7)} \\
      {cardio} &   {0.9368} &   {0.5275} &   {0.6151} &   {\textbf{0.9407} ($k$=8)} \\
      {landsat} &   {\textbf{0.6290}} &   {0.4889} &   {0.4849} &   {0.6241 ($k$=3)} \\
      {optdigits} &   {0.9802} &   {0.5815} &   {0.5343} &   {\textbf{0.9977} ($k$=3)} \\
      {pendigits} &   {0.9537} &   {0.4454} &   {0.5384} &   {\textbf{0.9575} ($k$=7)} \\
      {speech} &   {\textbf{0.5931}} &   {0.3996} &   {0.4981} &   {0.4968 ($k$=8)} \\
      {Pima} &   {\textbf{0.7215}} &   {0.4805} &   {0.6043} &   {0.7160 ($k$=3)} \\
      {satellite} &   {0.7873} &   {0.4743} &   {0.4737} &   {\textbf{0.7953} ($k$=3)} \\
      {satimage-2} &   {\textbf{0.9940}} &   {0.4158} &   {0.5565} &   {0.9854($k$=3)} \\
      {vowels} &   {\textbf{0.9748}} &   {0.7897} &   {0.8996} &   {0.7295 ($k$=3)} \\
    \bottomrule
  \end{tabular}
  }
\end{table}

\subsection{  Validation of Potential Distance Metric}\label{subsection_distance}

The choice of distance metric significantly influences anomaly detection performance, particularly in data with diverse dimensional characteristics. This experiment evaluates three metrics within the DROD framework: Euclidean, Mahalanobis, and Chebyshev distances, comparing them across multiple datasets with varying dimensionality.

As summarized in Table \ref{tab:distance_metrics_comparison}, Euclidean distance achieves consistently strong performance across most datasets, particularly when the data dimensionality is moderate. This confirms that the neighborhood relations and subset structures constructed by DROD remain meaningful under standard geometric assumptions. Mahalanobis distance provides advantages on high-dimensional datasets such as Speech, where it adjusts for correlations among features and yields more discriminative separation. Chebyshev distance performs well in specific scenarios where the anomaly patterns manifest along a dominant coordinate axis, which causes local maxima in the coordinate-wise deviations.

The results demonstrate that Euclidean distance remains a reliable default choice, and the framework allows alternative metrics to be selected when aligned with the intrinsic properties of the data. This flexibility validates the generality of DROD across different data distributions.
\begin{table}[t]
\centering
\caption{Comparative AUC performance of DROD with different distance metrics on datasets of various scales. The best result for each dataset is highlighted in \textbf{bold}.}
\label{tab:distance_metrics_comparison}
\resizebox{\linewidth}{!}{
\begin{tabular}{l|c|ccc}
\toprule
  {\textbf{Datasets}} &   {\textbf{Scales (n $\times$ d)}}  &   {\textbf{Euclidean}} &   {\textbf{Mahalanobis}} &   {\textbf{Chebyshev}} \\

\midrule
  {Pageblocks} &   {5393 $\times$ 10} &   {\textbf{0.9154}} &   {0.9129} &   {0.9096} \\
  {WPBC} &   {198 $\times$ 33} &   {\textbf{0.5249}} &   {0.4999} &   {0.4841} \\
  {Ionosphere} &   {351 $\times$ 33} &   {0.8598} &   {\textbf{0.8602}} &   {0.8241} \\
  {Cardiotocography} &   {351 $\times$ 33} &   {0.7431} &   {0.6942} &   {\textbf{0.7829}} \\
  {Cardio} &   {1831 $\times$ 21} &   {0.9368} &   {0.9370} &   {\textbf{0.9407}} \\
  {Landsat} &   {6435 $\times$ 36} &   {\textbf{0.6290}} &   {0.4499} &   {0.6221} \\
  {Optdigits} &   {5216 $\times$ 64} &   {\textbf{0.9802}} &   {0.9149} &   {0.9436} \\
  {Pendigits} &   {6870 $\times$ 16} &   {\textbf{0.9537}} &   {0.9413} &   {0.9415} \\
  {Speech} &   {3686 $\times$ 400} &   {0.5931} &   {\textbf{0.6042}} &   {0.5016} \\
  {Pima} &   {768 $\times$ 8} &   {0.7215} &   {\textbf{0.7273}} &   {0.7209} \\
  {Satellite} &   {6435 $\times$ 36} &   {0.7873} &   {0.6691} &   {\textbf{0.7877}} \\
  {Satimage-2} &   {5803 $\times$ 36} &   {\textbf{0.9940}} &   {0.9610} &   {0.9921} \\
  {Vowels} &   {1456 $\times$ 12} &   {\textbf{0.9748}} &   {0.9593} &   {0.9717} \\
\bottomrule
\end{tabular}}
\end{table}
\subsection{Ablation Study}\label{subsection_ablation}

For validating the main technical components, DROD is compared with its three variants: DROD-L, DROD-S, DROD-0, and a density-based baseline method LOF that also computes the anomaly score locally within reference sets. DROD-L employs only the micro anomaly index LAI (defined by Eq.~\eqref{equ_LAS}) as the final anomaly score. DROD-S only utilizes the macro anomaly index SAI (computed according to Eq.~\eqref{equ_SAS}) as the final anomaly score. DROD-0 is a variant of DROD without utilizing the sampling enhancement strategy and it has been presented in Section~\ref{sct:detection}. Comparative results on all 20 real datasets are presented in Table~\ref{tab:ablation}. 

DROD outperforms all its alternative variants and the LOF baseline on the majority of datasets. This demonstrates that neither local nor global anomaly information alone is sufficient to ensure stable detection across heterogeneous data distributions. DROD-L falls short on datasets where globally isolated anomaly clusters appear locally dense. DROD-S underperforms when anomalies are dispersed in small, sparse subsets that do not immediately exhibit strong global separation. These observations confirm that local LAI and global SAI capture complementary structures. DROD also improves over DROD-0 on most datasets, indicating that sampling enhancement contributes positively by improving the chance of isolating anomalous samples in multiple views. The sampling-based aggregation further increases robustness against local noise or structural overlap. These results validate that the complete DAI formulation, which integrates LAI, SAI, and sampling enhancement, is necessary to achieve consistently high detection performance.
\begin{table}[t]
  \centering
  \caption{AUC performance comparison of baseline and three ablated versions of DROD. The best performance is marked in \textbf{bold}.}
\resizebox{1\linewidth}{!}{
    \begin{tabular}{l|ccccc}
    \toprule
    \textbf{Datasets}   & \textbf{LOF+KFC}  & \textbf{DROD-L} & \textbf{DROD-S} & \textbf{DROD-0}  & \textbf{DROD}   \\ 
    \midrule
        Pageblocks      & 0.7263    &0.7250     &0.9020     &0.9116  & \textbf{0.9154} \\ 
        WPBC            & 0.4932    &0.4973    &0.5545     &\textbf{0.5546}  &0.5249  \\ 
        mnist           & 0.8019   &0.4361    &0.7863     &0.7599 &\textbf{0.8603}  \\
        musk            & 0.8014    &0.7270   &0.9604     &0.9481 &\textbf{0.9888}  \\
        Ionosphere      &\textbf{0.8889}     &0.7543     &0.8702     &0.8504    &0.8598\\
        Waveform        & 0.7826    &0.3864     &0.8220     &0.8182  & \textbf{0.8755}  \\
        %% Cardiotocography进行了缩写Cardiotoco.
        Cardiotoco.  & 0.5915  &0.5498     &0.7176 &0.7111  & \textbf{0.7431} \\
        cardio          &0.5887     &0.7597    &0.8965     &0.9127  &\textbf{0.9368} \\
        landsat         & 0.5512    &0.5092     &0.6046     &0.6104  &\textbf{0.6290}  \\ 
        optdigits       &0.3930     &0.5466     &0.9728     &0.9503   & \textbf{0.9802}  \\
        pendigits       &0.4763     &0.4794     &\textbf{0.9628}     &0.9335   & 0.9537 \\ 
        speech          &0.5219    &\textbf{0.6000}     &0.5797     &0.5913   &0.5931 \\
        thyroid         &0.8075    &0.9202    &0.8618     &0.9009   &\textbf{0.9400} \\
        Pima            & 0.5396    &0.6062     &0.6914     &0.6980   & \textbf{0.7215}  \\ 
        satellite       & 0.5179    &0.5483    &0.7790     &0.7625   &\textbf{0.7873}  \\ 
        satimage-2      & 0.5989    &0.7573     &0.9925     &0.9908   &  \textbf{0.9940} \\  
        vowels          & 0.8684    &0.8845     &0.8580     &0.9166   &\textbf{0.9748}   \\
        Seismic         &0.5579    &0.5486     &0.7053     &0.7015   &\textbf{0.7180} \\
        Banknote        & 0.5146   &0.4515     &0.5143    &0.5013   &\textbf{0.5217}   \\
        HeartDis.    &0.6511     &0.6127     &0.5928    &0.6229    &\textbf{0.6677}\\       
        \midrule  
        Avg. AUC &0.6336 &0.6150  &0.7812  & 0.7823 &\textbf{0.8093 } \\ 
    \bottomrule
    \end{tabular}%
}
  \label{tab:ablation}%
\end{table}
\subsection{Downstream Clustering Task Enhancement}\label{subsection_downstream}

To validate the effectiveness of the proposed method for downstream clustering tasks, DROD and the aforementioned comparison methods are employed to remove the top 150 outlier candidates from the real benchmark dataset ``optdigits'' \cite{Agg2015}. The clustering performance of $K$-means \cite{kmeans} is subsequently recorded. The ``optdigits'' dataset contains normal data representing digits 1–9 (nine natural classes) along with 150 outliers. Accordingly, the number of clusters for $K$-means is set to 9. Additionally, as a control experiment for evaluating dataset pre-processing without outlier detection methods, a detector labeled ``Random'' is included, which corresponds to randomly removing 150 samples from the dataset.
\begin{table}[t]
\centering
\caption{DBI of $K$-means on ``optdigits'' dataset with outlier removal using the nine compared methods as detectors. The AUC performance x for reference, and the symbol ``$\Delta$'' denotes boosted DBI based on the DBI performance without outlier removal. Results indicating better or worse performance are marked in \textcolor[RGB]{0,200,50}{green} or \textcolor{red}{red}, respectively.}
\resizebox{0.8\linewidth}{!}{
	\begin{tabular}{l|c|c} % centered columns (4 columns)
	\toprule
\textbf{Detectors} & \textbf{AUC} & \textbf{DBI ($\Delta$)}   \\
	\midrule
Random   & -    & 1.9578 \;(0.0000)\\
$k$NN+KFC   & 0.5249    & 1.9132\; (\textcolor[RGB]{0,200,50}{-0.0446})\\
LOF+KFC   & 0.3921        & 1.9227 \;(\textcolor[RGB]{0,200,50}{-0.0351})  \\ %1.9227
DGOF+KFC  & 0.2831          & 1.9117 \;(\textcolor[RGB]{0,200,50}{-0.0461})   \\
IFOREST   & 0.7140        & 1.9341 \;(\textcolor[RGB]{0,200,50}{-0.0237})    \\
CBLOF   & 0.7547         & \ 2.0125 \;(\textcolor{red}{+0.0547})   \\
OCSVM   &  0.5336              &1.9573 (\textcolor[RGB]{0,200,50}{-0.0005})   \\     
ECOD   & 0.6045           & 1.9394 \;(\textcolor[RGB]{0,200,50}{-0.0184})  \\
COPOD   & 0.6824        & 1.9399 \;(\textcolor[RGB]{0,200,50}{-0.0179})   \\
DROD (ours)     & 0.9802         & 1.9067 \;(\textcolor[RGB]{0,200,50}{-0.0511}) \\
    \bottomrule
	\end{tabular}
	}
	\label{tab:clustering}
\end{table}

% Table~\ref{tab:clustering} reports the result of the DBI of the K-means on different outlier-removed ``optdigits'' datasets preprocessed by different approaches. From the table, almost all methods (except CBLOF) enhance K-means performance compared to the performance on the original dataset without outlier removal (i.e., the performance of ``Random''). CBLOF results in a negative effect in enhancing clustering due to its clusterlier hypothesis. When the majority of outliers of a dataset tend to be scatterliers, the performance of CBLOF may surely degrade in outlier detection. Clearly, DROD outperforms all counterparts in enhancing downstream clustering performance, as it is competent in detecting and removing the various outliers in complex real datasets.
Table~\ref{tab:clustering} shows that although most methods improve clustering quality compared with ``Random'', CBLOF underperforms when anomaly clusters are prominent yet insufficiently separated in size or distance from large normal clusters. In such cases, its size-based assumption that outliers appear as compact micro clusters can be violated, causing it to under-detect anomalies or mistakenly remove boundary-normal samples. This behavior is evident on the optdigits dataset, where compact anomalous groups are comparable in size to or located close to major normal clusters. In contrast, DROD achieves the lowest DBI, indicating that it effectively eliminates structurally disruptive samples that interfere with cluster formation. These results confirm the effectiveness of DROD in boosting clustering performance.

\subsection{Hyper-parameter Evaluation}\label{subsection_parameter}

A sampling enhancement mechanism is designed to make the anomaly score computation of DROD more effective in exploring outliers. The two hyper-parameters involved in this mechanism, i.e., sampling rate $\eta$ and sampling times $T$, are evaluated to show the rationality of our settings in all the above experiments and the hyper-parameter-insensitivity of DROD.

\begin{figure}[t]
	\centering
	\includegraphics[width=0.9\linewidth]{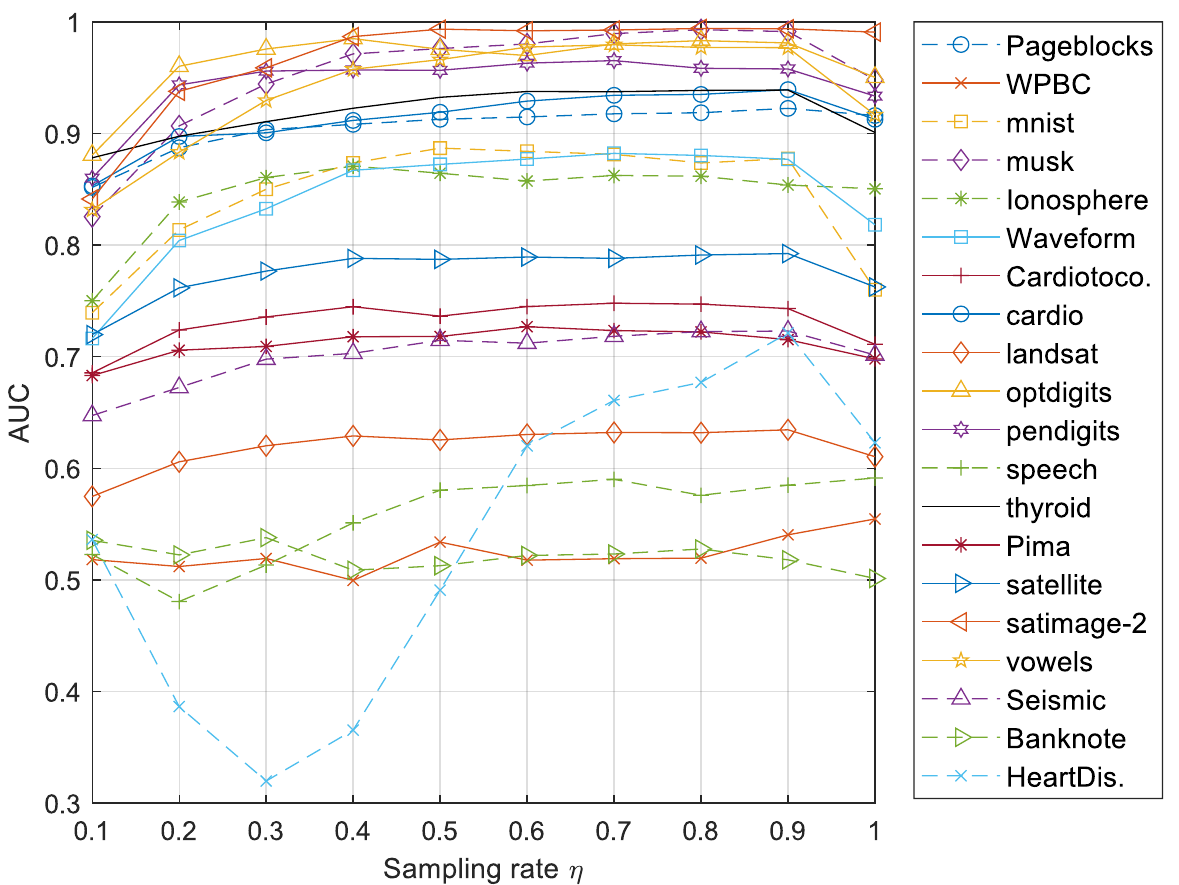}
	\caption{AUC of DROD on 20 datasets with sampling rate $\eta$ from $[0.1, 1]$ with step size 0.1 under sampling times $T=100$.}
	\label{figure-rate}
\end{figure}

\begin{figure}[t]
	\centering
	\includegraphics[width=0.90\linewidth]{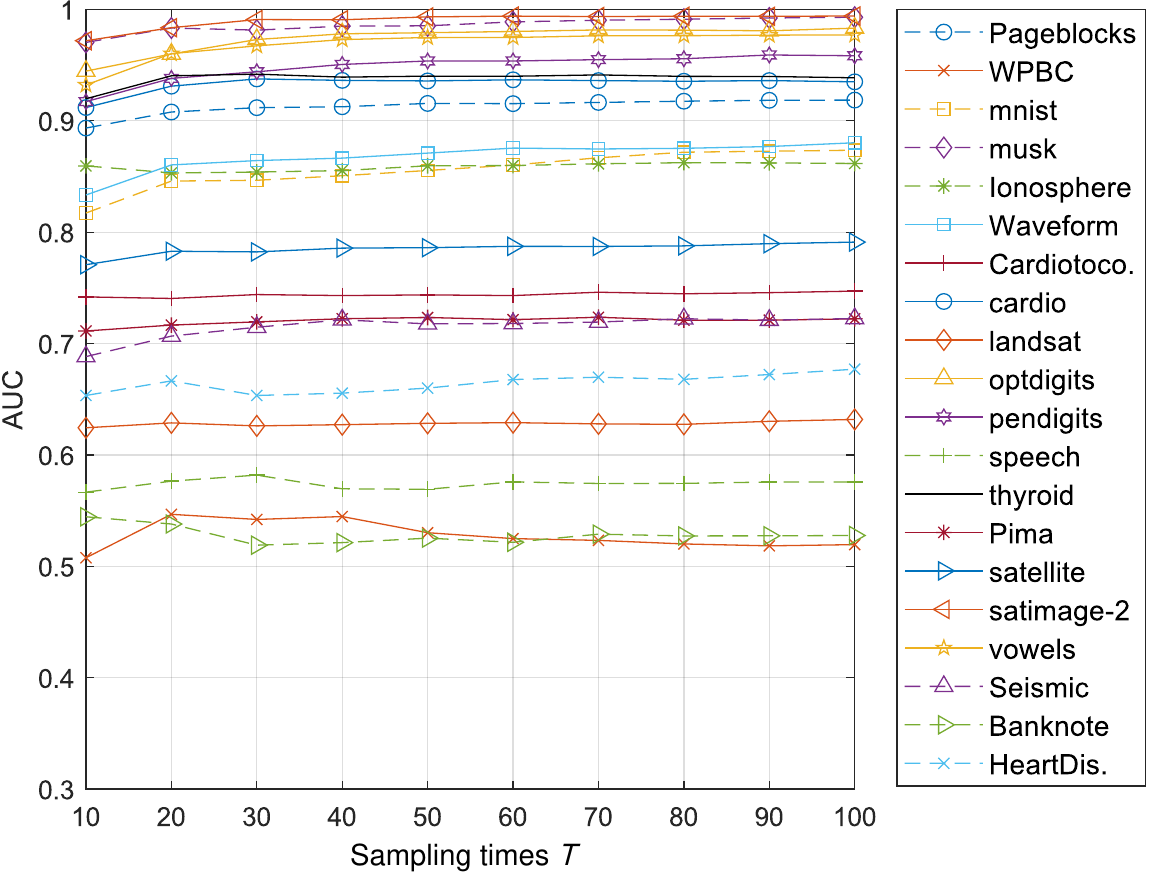}
	\caption{AUC of DROD on 20 datasets with sampling times $T$ from $[10, 100]$ with step size 10 under sampling rate $\eta=0.8$.}
	\label{figure-time}
\end{figure}

\subsubsection{Sensitivity to Sampling Rate $\eta$}

In the above experiment, the random sampling rate is set at $\eta=0.8$. This experiment is designed to explore parameter settings by conducting 100 random samplings on 20 real datasets. The sampling rate $\eta$ is increased from 0.1 to 1 with a step size of 0.1, and the corresponding AUC performance is recorded in Fig.~\ref{figure-rate} to evaluate the sensitivity of DROD to different $\eta$ settings. The results indicate that the AUC performance remains insensitive to the sampling rate $\eta$ across a wide range from 0.4 to 0.9. When the sampling rate increases from 0.9 to 1, the AUC decreases on 18 datasets. In the case of $\eta=1$, the sampling enhancement is effectively omitted, leading to a slight decrease in detection performance. When $\eta$ is relatively small, ranging from 0.1 to 0.4, the sampled subset may fail to represent the comprehensive distribution of the original dataset due to an insufficient number of samples. This also explains why datasets with relatively few samples or a high proportion of outliers, such as WPBC, HeartDis., and Banknote, exhibit higher fluctuation in AUC curves. Therefore, the recommended range for the sampling rate is between 0.4 and 0.9 for larger-scale datasets with a relatively small proportion of outliers. Consequently, setting the sampling rate to $\eta=0.8$ can be considered a reasonable choice for most experimental datasets.

\subsubsection{Sensitivity to Sampling Times $T$}

In the above experiments, the sampling times were fixed at $T=60$. To assess its influence, we varied $T$ from 10 to 100 with a fixed sampling rate $\eta=0.8$. As shown in Fig.~\ref{figure-time}, AUC generally increases with larger $T$, but saturates beyond $T=60$, indicating limited further gain relative to computation cost. Therefore, $T=60$ is adopted as a practical trade-off.
\begin{figure}[!t]
  \centering
  \subfloat[Samples vs Anomaly\label{fig:subfig_a}]{
    \includegraphics[width=.45\linewidth]{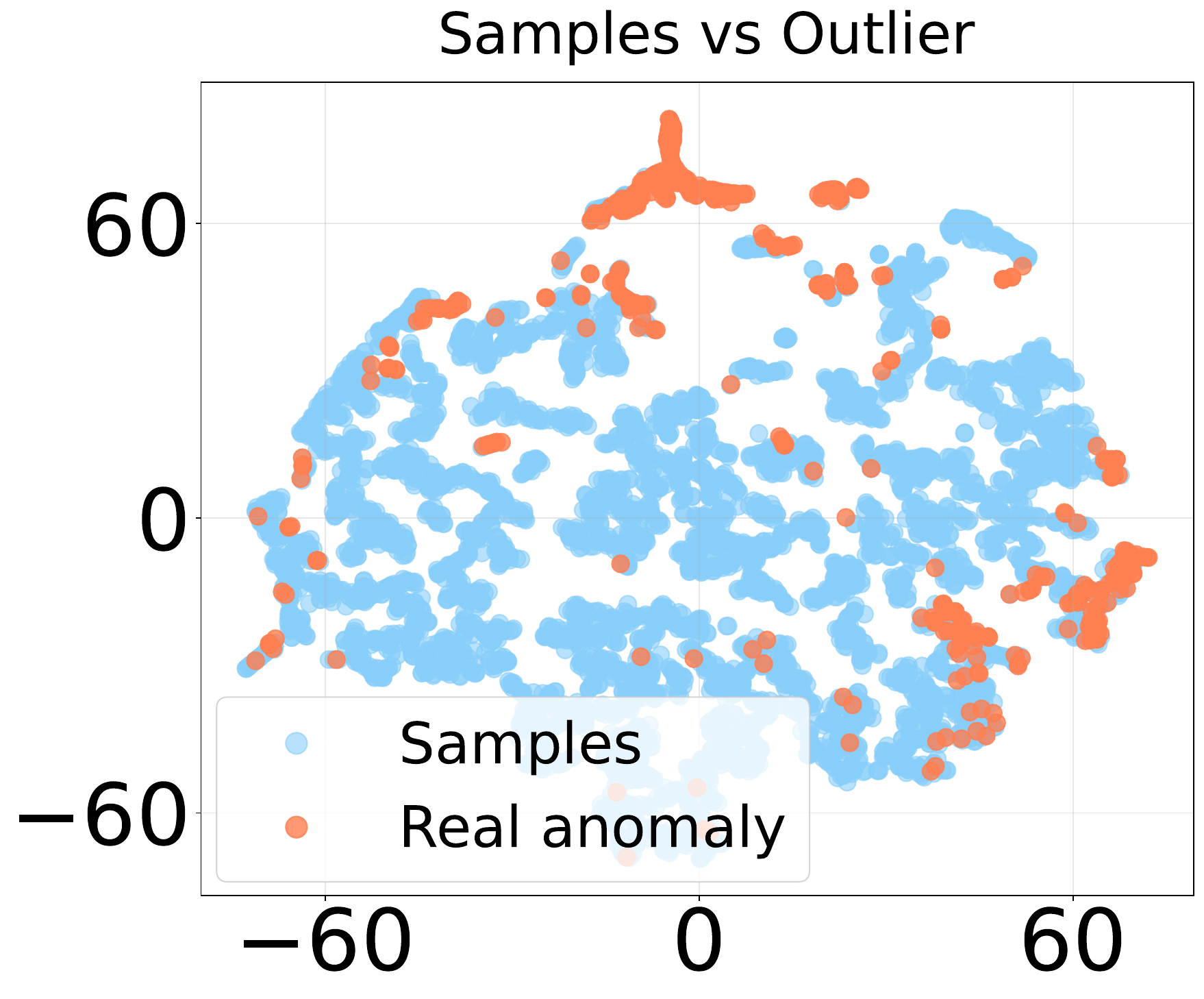}
  }\hfill
  \subfloat[DAI values of anomaly\label{fig:subfig_b}]{
    \includegraphics[width=.45\linewidth]{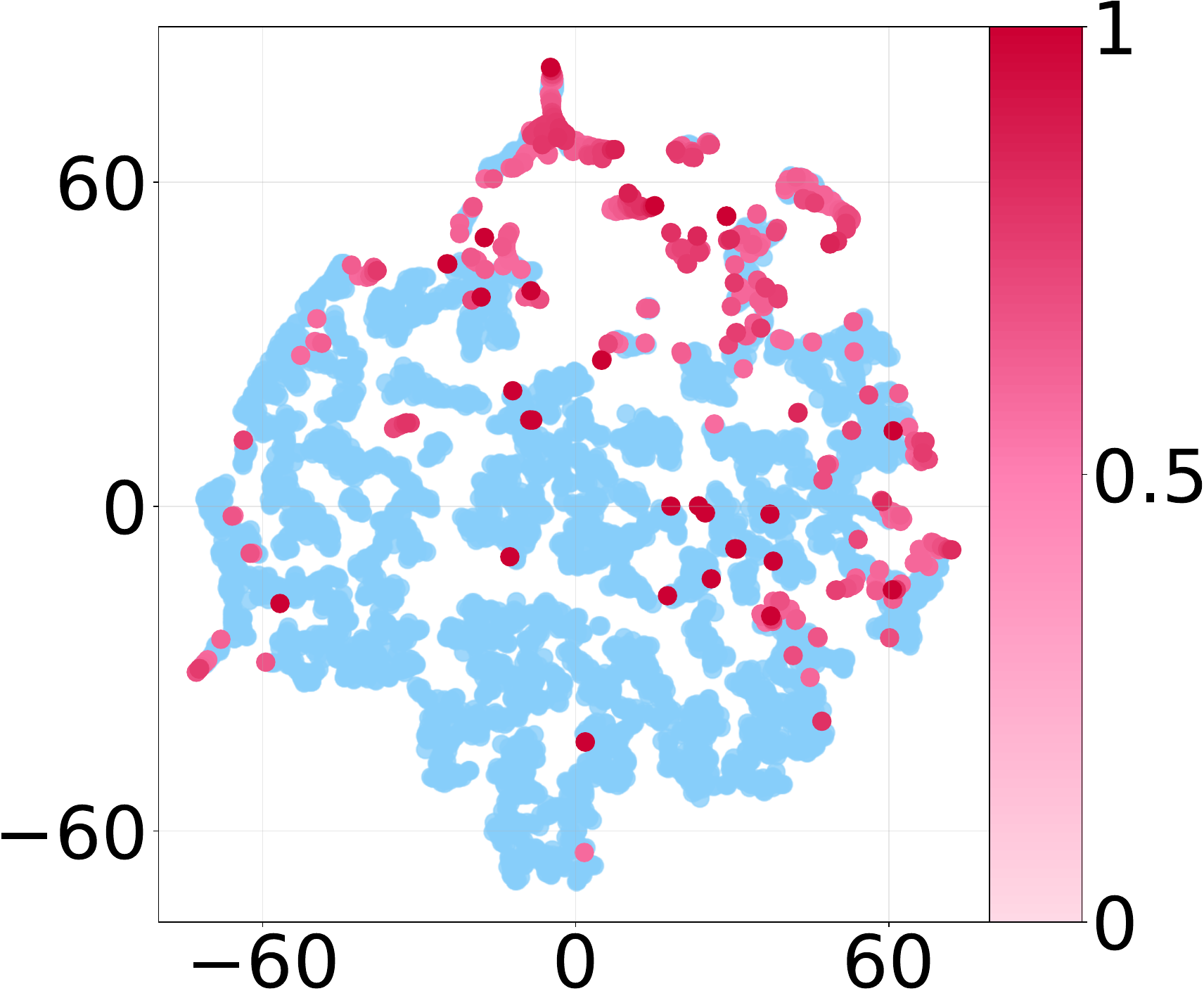}
  }

  \subfloat[LAI values of anomaly\label{fig:subfig_c}]{
    \includegraphics[width=.45\linewidth]{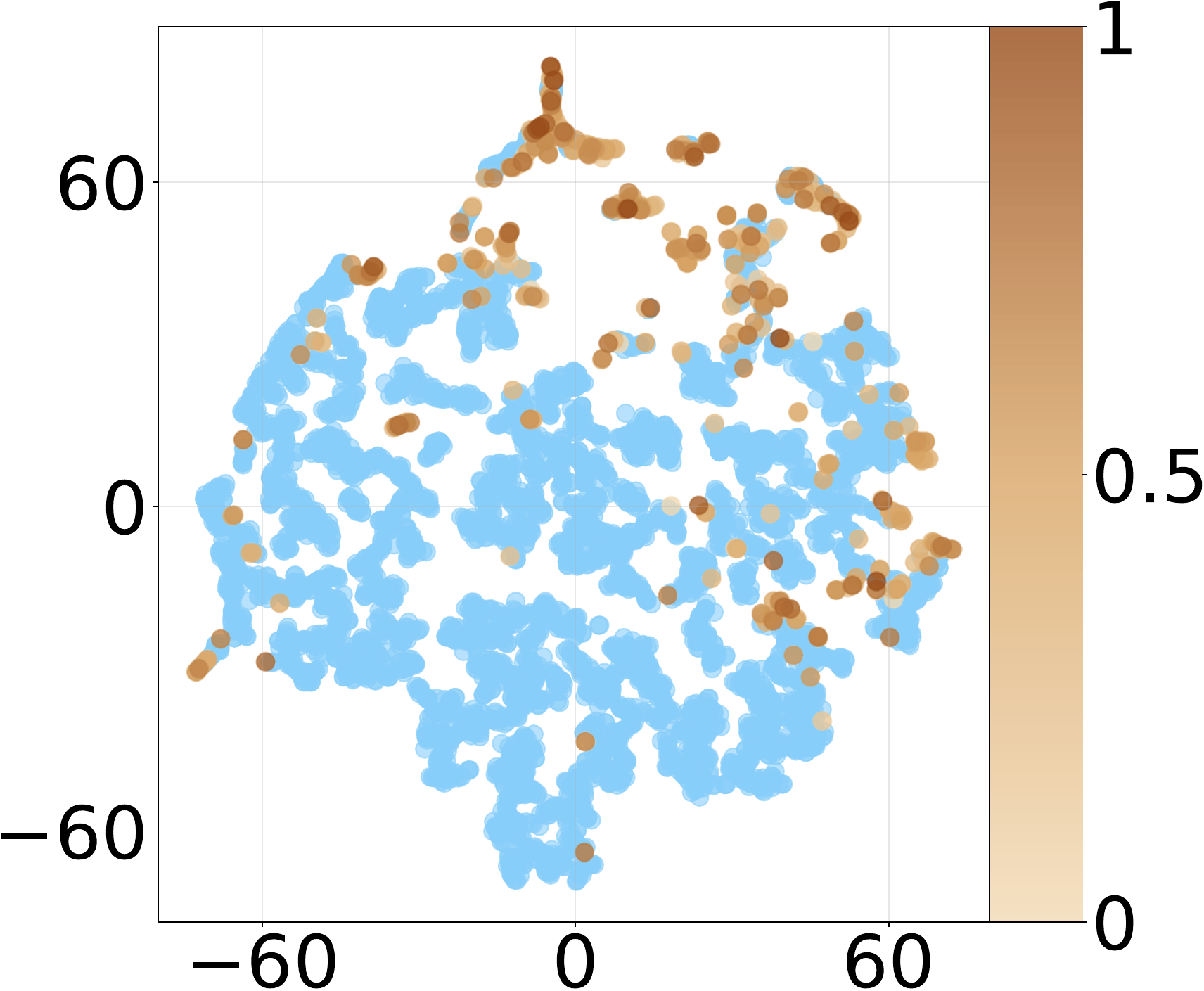}
  }\hfill
  \subfloat[SAI values of anomaly\label{fig:subfig_d}]{
    \includegraphics[width=.45\linewidth]{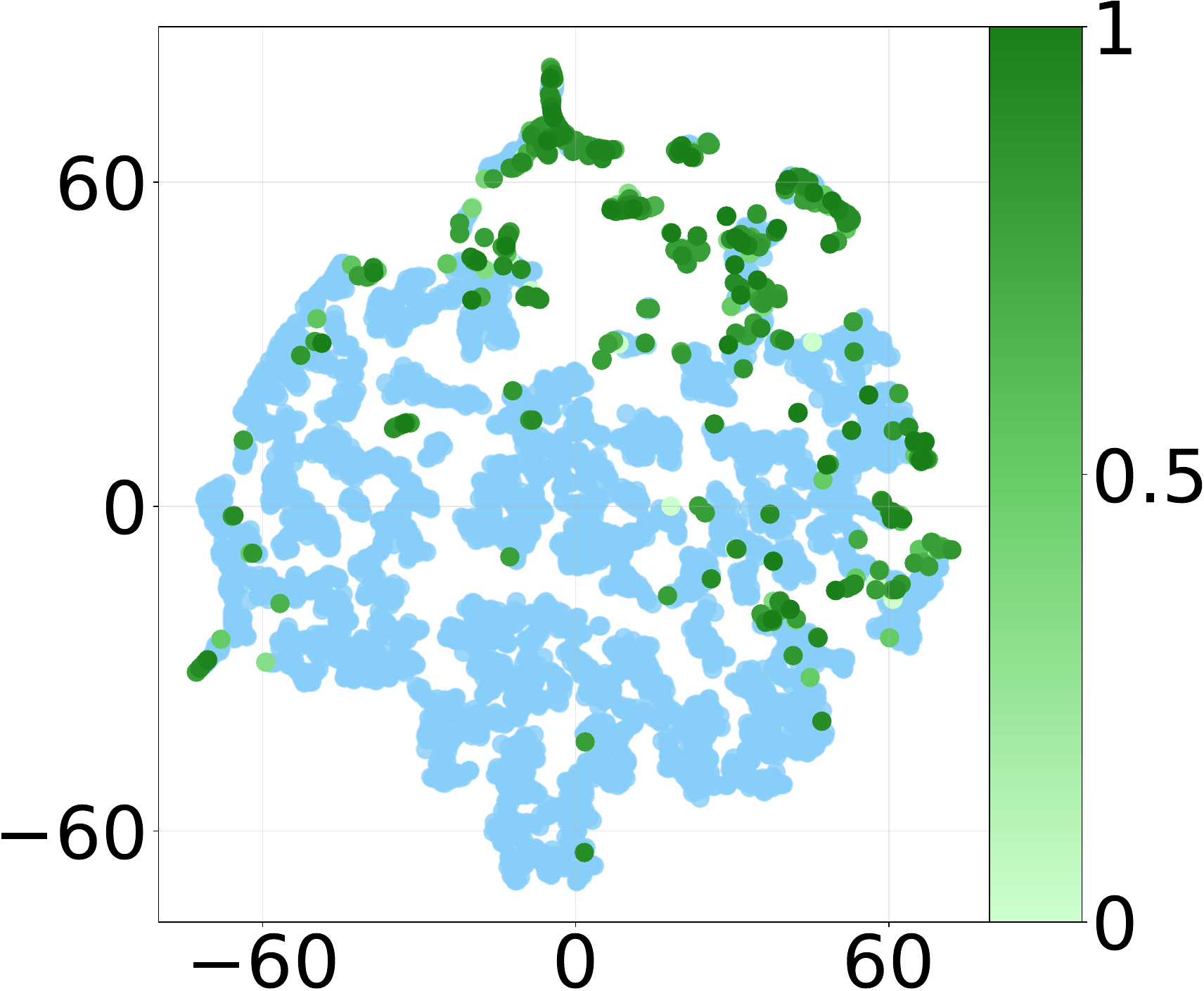}
  }
  \caption{Visualization of DAI mechanism through t-SNE embedding: (a) Distribution of normal data and ground-truth anomalies; (b) DAI values of predicted anomalies; (c) LAI values of predicted anomalies; (d) SAI values of predicted anomalies, demonstrating the synergistic relationship between local and global anomaly indicators.}
  \label{laisai}
\end{figure}

\subsection{Study of the Relationships among DAI, SAI, and LAI}\label{subsection_daivalidity}

To demonstrate the computational mechanism and effectiveness of DAI, a t-SNE visualization analysis is conducted on the dataset, as shown in Fig. \ref{laisai}.

Subfigure (a) distinguishes between normal and anomalous samples based on ground-truth labels. It can be observed that the anomalous samples exhibit a certain degree of clustering, forming distinct regional separations from the normal samples, which offers visual justification for anomaly detection at the distribution level. Subfigure (b) displays the distribution of DAI values for all predicted anomalies, facilitating the observation of subsequent interactions between LAI and SAI values. 

Subfigures (c) and (d) respectively show the performance of LAI and SAI for samples predicted as anomalous. The visualization reveals a synergistic relationship between LAI and SAI: when SAI is low, even samples with high LAI may have their resulting DAI suppressed to a lower level; conversely, when SAI is high, the anomaly degree of samples with high LAI is further amplified, leading to a higher DAI score. This phenomenon fully demonstrates the role of the weighting mechanism, where \(\beta(s_m) = \text{SAI}(s_m)\), in distinguishing noise from true anomalies. Furthermore, the visualization indicates that certain types of anomalies, such as cluster-based outliers, are more easily highlighted during computation because of their weak connectivity in the reference sets, resulting in higher DAI prominence. These observed distribution characteristics align well with the original design intention of DAI, thereby further verifying its practical effectiveness.

\section{Data and code availability}

All datasets used in this work can be found at \url{https://github.com/gordonlok/DROD} under the ``data'' file, and the datasets' origins are cited in Table~\ref{table:datasets}.

\section{Conclusion}\label{sec_con}

In this paper, we propose a novel unsupervised outlier detection method called DROD to distinguish between scatterliers and clusterliers. It effectively alleviates the masking effect brought by clusterliers in detecting scatterliers through the hierarchically dual reference set design, which reflects clusterliers in a macro subset reference set level and finely indicates scatterliers within subset reference sets. To the best of our knowledge, this is the first attempt to simultaneously tackle the scatterlier and clusterlier detection problems with the considering their couplings. By comparing with the state-of-the-art related methods on various datasets, the proposed DROD method is proven to be superior in terms of both accuracy and robustness. Extensive experiments, including outlier detection performance comparison, significance tests, ablation studies, downstream clustering task enhancement evaluation, and sensitivity evaluation of the hyper-parameters, collectively demonstrate the efficacy of DROD.

Although DROD demonstrates superiority, it is also not exempt from limitations. When dealing with datasets with highly imbalanced clusters, distinguishing the micro-clusters composed of clusterlier samples from normal small clusters is still an under-explored but significant issue. For such a thorny problem, connecting the anomaly score measurement with the downstream analysis tasks would be a promising future research orientation of unsupervised outlier detection.
\bibliographystyle{IEEEtran}    % 官方IEEE样式
\bibliography{references}  
% \iffalse

\begin{IEEEbiography}[{\includegraphics[width=1in,height=1.5in,keepaspectratio]{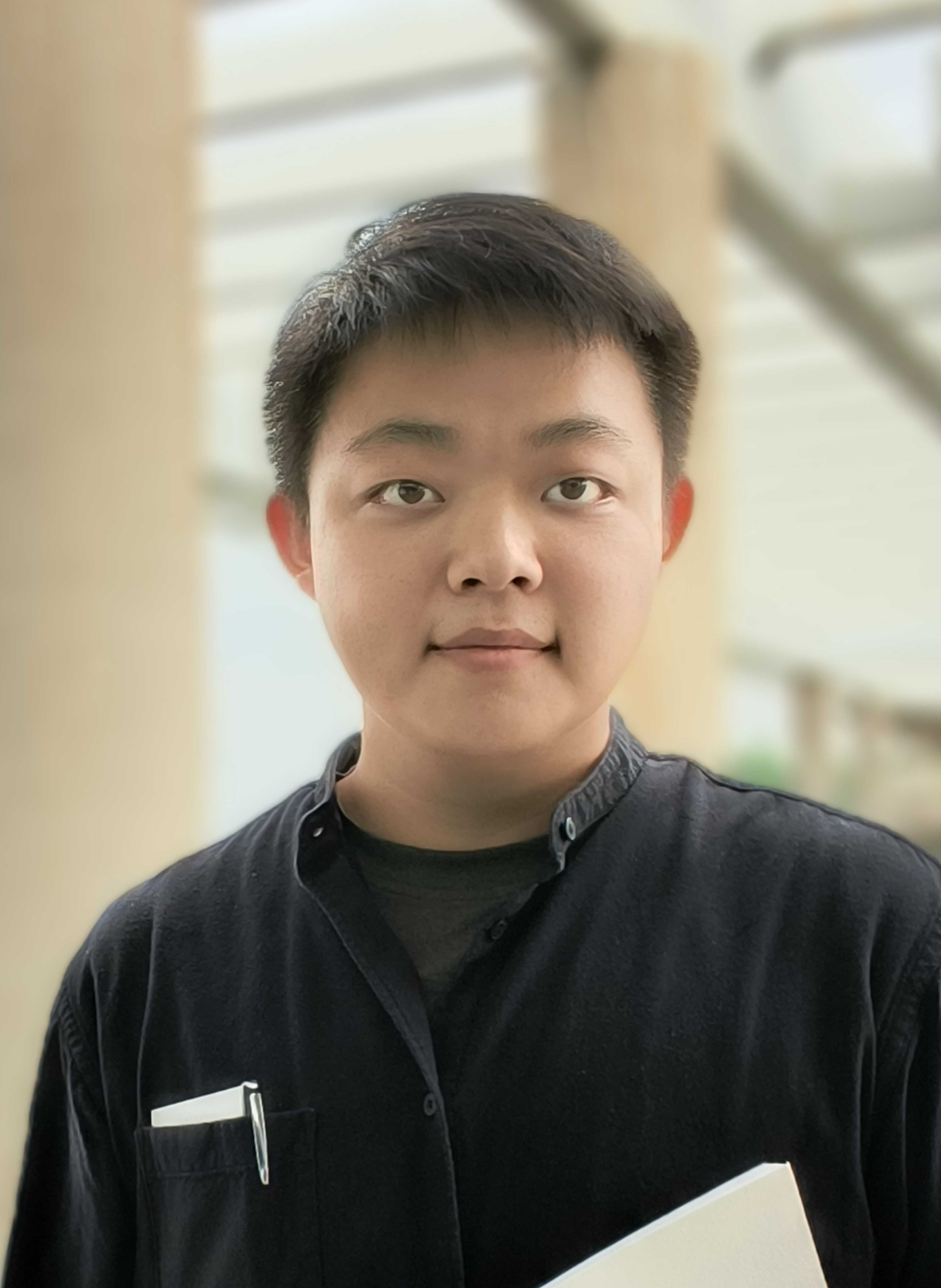}}]{Yiqun Zhang}
(Senior Member, IEEE) received the B.Eng. degree from South China University of Technology, Guangzhou, China, in 2013, and the M.S. and Ph.D. degrees from Hong Kong Baptist University (HKBU), Hong Kong, China, in 2014 and 2019, respectively. He is currently with the School of Computer Science and Technology of Guangdong University of Technology, Guangzhou, China, and also with the Department of Computer Science of HKBU. His research works have been published in top-tier journals and conferences, including TPAMI, TIP, TNNLS, TCYB, SIGMOD, SIGKDD, NeurIPS, CVPR, to name a few. His current research interests include machine learning, data science, and their applications. Dr. Zhang serves as an Associate Editor for the \textit{IEEE Transactions on Emerging Topics in Computational Intelligence}. 
\end{IEEEbiography}

\begin{IEEEbiography}
[{\includegraphics[width=1in,height=1.5in,keepaspectratio]{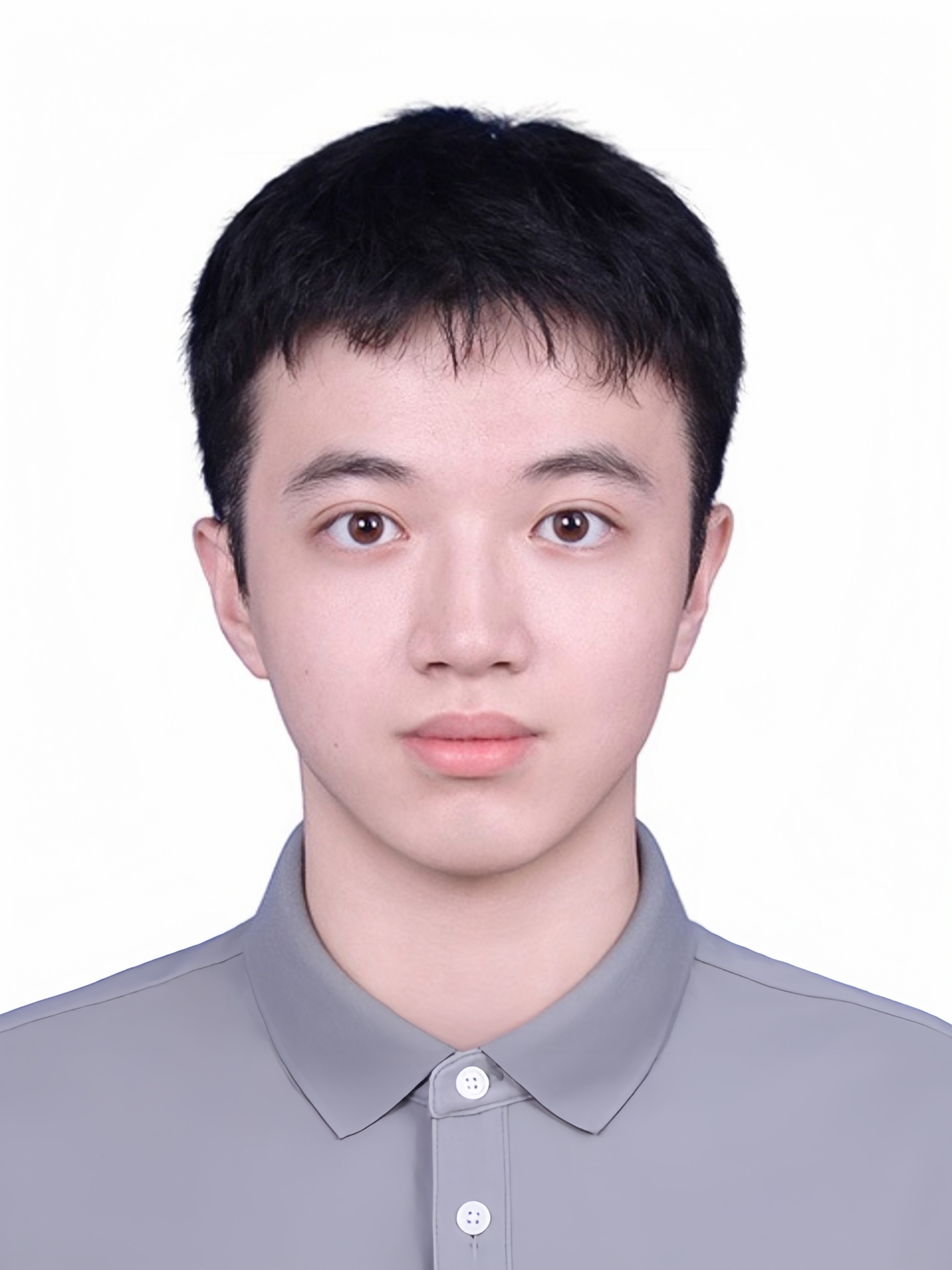}}]
{Zexi Tan} is with the School of Computer Science and Technology, Guangdong University of Technology, Guangzhou, China. His current research interests include unsupervised machine learning and time-series data analysis. He has published a series of research works in reputable journals and conferences, including TNNLS, AAAI'26, BIBM'25, and so on. 
%Mr. Tan serves as a reviewer for several international journals and conferences, including TETCI, PRICAI'25, BIBM'24, BIBM'25, and so on.
\end{IEEEbiography}

\begin{IEEEbiography}[{\includegraphics[width=1in,height=1.5in,keepaspectratio]{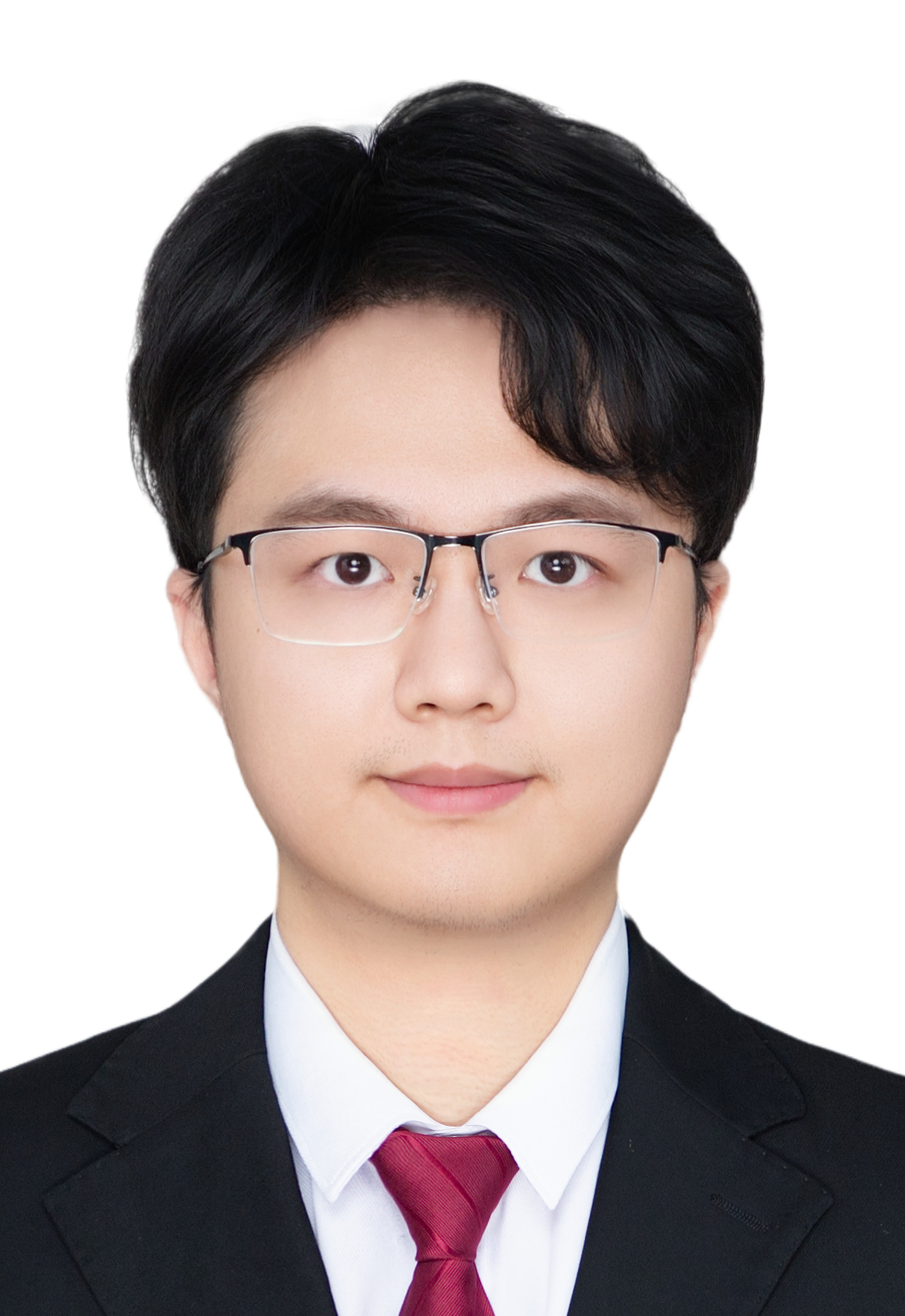}}]{Xiaopeng Luo}(Member, IEEE)
received the B.Eng. degree from Shenzhen University, Shenzhen, China, in 2019, the M.S. degree from Hong Kong Baptist University, Hong Kong, China, in 2021, and the MPhil degree from Beijing Normal University - Hong Kong Baptist University United International College (UIC), Zhuhai, China, in 2023. His current research interests include machine learning, differential privacy, and their applications. He has published a series of research works in reputable journals and conferences, including TNNLS, AAAI'26, ICDCS'24, BIBM'24, and so on. 
\end{IEEEbiography}

\begin{IEEEbiography}
[{\includegraphics[width=1in,height=1.5in,keepaspectratio]{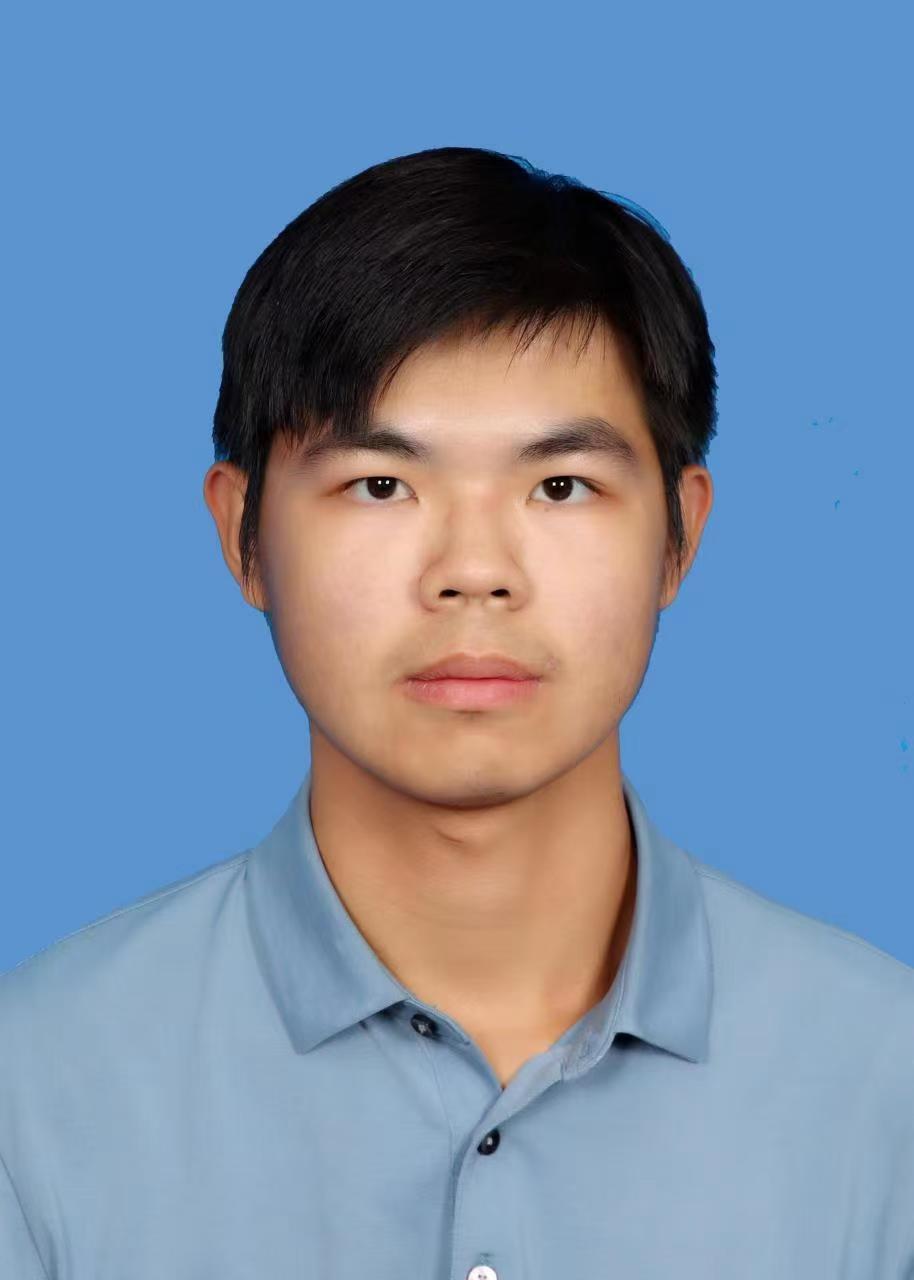}}]
{Yunlin Liu} is with the School of Computer Science and Technology, Guangdong University of Technology, Guangzhou, China. His current research interests include unsupervised machine learning and differential privacy.
\end{IEEEbiography}

\end{document}